\documentclass{article}

\usepackage[preprint, nonatbib]{neurips_2026}
\usepackage[utf8]{inputenc}
\usepackage[T1]{fontenc}
\usepackage{hyperref}
\usepackage{url}
\usepackage{booktabs}
\usepackage{amsfonts}
\usepackage{amsmath}
\usepackage{nicefrac}
\usepackage{microtype}
\usepackage{xcolor}
\usepackage{graphicx}
\usepackage{subcaption}
\usepackage{bm}
\usepackage{authblk}
\usepackage[numbers]{natbib}

\newtheorem{definition}{Definition}
\newtheorem{theorem}{Theorem}
\newtheorem{proposition}{Proposition}
\newtheorem{corollary}{Corollary}[theorem]

\title{Topological Out-of-Domain Generalization in Dynamical Systems Reconstruction}

\author[1,2,$\dagger$,*]{Georg Trede}
\author[1,2,$\dagger$,*]{Charlotte Ricarda Doll}
\author[1,2]{Elias Weber}
\author[1,2,3,$\dagger$]{Daniel Durstewitz}
\affil[1]{Dept. of Theoretical Neuroscience, Central Institute of Mental Health, Mannheim, Germany}
\affil[2]{Faculty of Physics and Astronomy, Heidelberg University, Germany}
\affil[3]{Interdisciplinary Center for Scientific Computing (IWR), Heidelberg University, Germany}
\affil[$\dagger$]{Correspondence to: \{georg.trede, charlotte.doll, daniel.durstewitz\}@zi-mannheim.de}
\affil[*]{Equal contribution}

\begin{document}

\maketitle

\begin{abstract}

Predicting the behavior of dynamical systems (DS) beyond the dynamical and parameter regimes observed in training is a pivotal and essentially unresolved problem in scientific ML. It is central to any good scientific theory, which we expect to be able to make predictions about regimes not covered by currently available data. Recent hierarchical and hyper-network guided approaches for DS reconstruction (DSR) enable training on many DS simultaneously, and revealed that extracted latent features are often related to crucial control parameters of the underlying DS that varied across the training corpus. However, true out-of-domain forecasting abilities of these models, e.g., across tipping points, remain limited, and fine-tuning, or even full model retraining, on time series from the new dynamical regime is usually required. Here, we mathematically analyze the root of these limitations in previous model formulations and identify three core shortcomings rooted in a mismatch between structural assumptions of the reconstruction model and typical properties of physical systems. We propose a combination of remedies for these shortcomings, most importantly \emph{feature splitting}, and furthermore derive a closed-form bound on the reliable extrapolation range. We demonstrate empirically that our techniques allow for accurate zero-shot prediction into new dynamical regimes, outside the observed training regime, as, e.g., encountered across tipping points. 
\end{abstract}

\section{Introduction}
\label{sec:intro}

From turbulent flows and climate systems to ecosystems and neural population activity, much of science is concerned with understanding and predicting the behavior of complex dynamical systems (DS) \cite{brunton_data-driven_2019,durstewitz_reconstructing_2023,gilpin_generative_2024}. A variety of data-driven ML approaches have been proposed toward this goal of dynamical systems reconstruction (DSR) in recent years, ranging from library-based methods like sparse identification of nonlinear dynamical systems (SINDy)~\citep{brunton_discovering_2016, champion_data-driven_2019}, methods based on Neural Ordinary Differential Equations (Neural ODEs)~\citep{chen_neural_2018, karlsson_modelling_2019, alvarez_dynode_2020, ko_homotopy-based_2023}, Koopman operators~\citep{brunton_modern_2021, naiman_koopman_2021, wang_koopman_2022, azencot_forecasting_2020}, Reservoir Computers \citep{pathak_using_2017, platt_systematic_2022}, and physics-informed models~\citep{raissi_physics-informed_2019, cuomo2022scientific}, to methods employing Recurrent Neural Networks (RNNs) ~\citep{hochreiter_lstm_97, vlachas2018data, durstewitz_state_2017,hess_generalized_2023} for approximating the DS' underlying flow map. However, most of these require custom training on any observed DS from scratch and do not easily generalize to new systems, parameter configurations, or dynamical regimes \cite{goring_domain_2024}. In contrast, from a good scientific model built based on only limited data, we would expect that it can predict the behavior of the underlying DS under novel parameter configurations within potentially novel dynamical regimes not previously encountered. For instance, a good biophysical model of the nervous system, even if constructed from biophysical principles derived solely from healthy tissue, would be able to forecast that certain parameter changes lead to epileptic activity \cite{Jirsa2014}. This is a hallmark of scientific theories that is currently unmatched by data-driven ML/AI approaches \cite{wang_generalizing_2023}.

Earlier work on forecasting \textit{non-autonomous} systems has shown some promise in predicting previously unseen dynamics across bifurcations \citep{patel_using_2023, koglmayr2024extrapolating}. However, these approaches typically rely on specific assumptions about the data available, which are empirically often not met, and explicitly couple the dynamical variables to the progression of time, thus do not enable to independently explore ranges of control parameters. 
Ultimately, to achieve true out-of-domain (OOD) generalization (beyond the observed parameter range), reconstructing isolated systems is not sufficient \citep{goring_domain_2024} but DSR models must be able to capture entire \emph{families} of DS across a range of varying control parameters. Indeed, in the last few years, several such frameworks for training across many DS simultaneously \cite{yin2021leads, kirchmeyer2022generalizing, brenner2024learning, vermani2024meta} have surfaced. So far, however, none of them solves the fundamental problem of extrapolating beyond the parameter regime observed in training. 

Here, we mathematically analyze the root of these limitations within the hierarchical DSR framework recently proposed by \cite{brenner2024learning}, but extend the analysis from the originally used piecewise linear RNNs (PLRNNs) to more general discrete-time as well as continuous-time model architectures like Neural ODEs~\citep{chen_neural_2018}. Hierarchical or meta-learning DSR frameworks differentiate between group-level parameters learned across all observed DS, and a small number of latent features which parameterize system-specific DSR models \cite{yin2021leads,kirchmeyer2022generalizing,vermani2024meta,huh_context-informed_2025}. It has been observed empirically that after training on a larger set of DS, the model's latent features map (surprisingly often linearly) onto crucial control parameters that varied within the underlying population of DS \citep{brenner2024learning, huh_context-informed_2025}. While, in theory, this should enable to forecast OOD, in practice fine-tuning or retraining on additional data from the new dynamical regime was still required \citep{brenner2024learning, kirchmeyer2022generalizing}. This indicates that there still must be a mismatch between the model's and the true physical system's structural properties. 

We identify three fundamental structural discrepancies that explain these OOD failures:
(1)~the model Jacobians commonly depend densely on latent features, whereas real systems typically exhibit sparse control-parameter dependence; 
(2)~important geometrical properties of the true system's state space vary in a different way with changes in control parameters than is the case in the DSR model (Fig.~\ref{fig:figure1});
and (3)~the process of time discretization in discrete-time DSR models may induce nonlinear parameter dependencies not present in the true physical DS. 
By fixing the first two issues and establishing bounds on the reliable extrapolation range for the third, we show empirically that we are able to significantly improve \textit{zero-shot} out-of-domain generalization (OODG) across tipping points. We further demonstrate that the general recipe developed here can also be incorporated into other DSR approaches beyond RNNs. Our core contribution is thus a detailed theoretical analysis and solution strategy that significantly improves OODG in DSR without any specific data requirements or strong structural priors in a zero-shot fashion.

\begin{figure}[ht!]
    \centering
    \includegraphics[width=\linewidth]{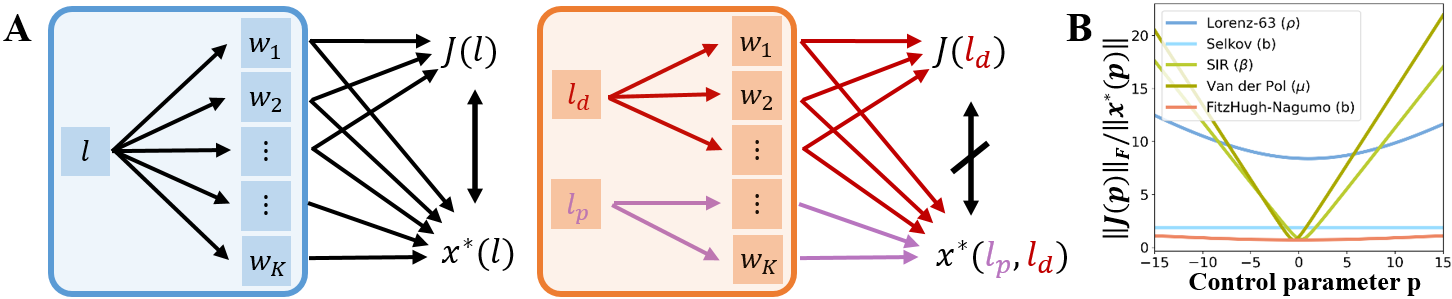}
    \caption{Illustration of hierarchical DSR models. \textbf{(A)} Vanilla hierarchical model (left) uses a single feature vector $\bm{l}$ to obtain weights $\bm{w}_k$ of specific DSR model instantiations, which entangles the scaling of the Jacobian $\bm{J}(\bm{l})$ with the fixed point (FP) position $\bm{x}^*(\bm{l})$ as $\bm{l}$ changes. Conversely, utilizing independent features (right) $\bm{l}_d$ and $\bm{l}_p$ for the local dynamics and FP location enables the decoupled scaling observed in physical systems. \textbf{(B)} Examples of differential Jacobian-vs.-location scaling behavior with control parameter $p$ for different `textbook' examples of DS. $\|\cdot\|_F$ denotes the Frobenius norm; the Jacobian is evaluated at a constant point $\bm{x}_0$.
    }
    \label{fig:figure1}
\end{figure}

\section{Related Work}
\label{sec:relatedwork}

\paragraph{Dynamical Systems Reconstruction (DSR)} 
DSR aims to infer generative surrogate models that capture the invariant (long-term) topological and statistical properties of a true data-generating process \citep{brunton_data-driven_2019, durstewitz_reconstructing_2023}. While library-based approaches like SINDy \citep{brunton_discovering_2016} offer high interpretability, they usually fail on systems which are not covered by their library terms \cite{hess_generalized_2023}. Alternatively, universal approximators of DS such as RNNs \citep{zipser1991recurrent,durstewitz_state_2017,mikhaeil_difficulty_2022} and Neural ODEs \citep{chen_neural_2018, karlsson_modelling_2019, alvarez_dynode_2020, ko_homotopy-based_2023} provide high expressivity. However, training these highly parameterized models on chaotic trajectories is inherently unstable, necessitating advanced control-theoretic strategies \citep{mikhaeil_difficulty_2022,hess_generalized_2023} or special regularization criteria \cite{jiang2023training,platt2023constraining} that ensure long-term statistics and attractor geometries are correctly recovered.

\paragraph{Extrapolating across bifurcations and tipping points} 
In DSR, true OODG means that models should be able to extrapolate to domains \textit{topologically} different from the training regime, such as different basins of attraction of a multistable DS or novel parameter regimes that drive a DS across a tipping point \cite{ashwin2012tipping,goring_domain_2024}. Several studies designed \textit{non-autonomous} (explicitly time-dependent) DSR models based on reservoir computers \cite{kim_teaching_2021, patel_using_2023, koglmayr2024extrapolating} for forecasting dynamical regimes beyond tipping points \cite{ashwin2012tipping}, often enhancing performance through specifically designed training curricula \cite{patel_using_2023} or physics informed approaches \cite{lim_predicting_2020}. However, these approaches remain limited as they often rely on particular assumptions about the structure of the DS or of the data (e.g., by assuming control parameters are explicitly available in training; \cite{kim_teaching_2021}), or by exploiting the behavior of trajectories near critical points or other topological dependencies among dynamical regimes \cite{patel_using_2023}. Similarly, approaches like SINDyCP \citep{nicolaou2023data} achieve cross-bifurcation extrapolation only via a prespecified library of parameter dependencies. This restricts the application of all these methods to scenarios where the (often strong) assumptions are met, leaving the more general problem of predicting across tipping points unresolved.

\paragraph{Learning parameterized DS families} 
While the DSR models described above are all custom-trained on a system at hand, more recent DSR foundation models \cite{hemmer2025true} and frameworks for learning entire families of DS were developed, based on hierarchical and/or hyper-network approaches~\cite{ha2016hypernetworks,von2019continual}. These typically separate shared physical laws from environment-specific parameters via low-dimensional context vectors \citep{yin2021leads, kirchmeyer2022generalizing, panahi2024adaptable} or employ meta-learning across diverse DS families \citep{vermani2024meta, nzoyem2025mixer, brenner2024learning, kong_reservoir_2023}. Importantly in the present context, it has been observed that in hierarchical models \cite{brenner2024learning, huh_context-informed_2025} unsupervised latent features often correlate strongly with the true underlying control parameters, potentially providing a structural basis for knowledge transfer into new dynamical domains.

\section{Background and problem setup}
\label{sec:background}

\subsection{Dynamical Systems}
Following standard DS textbooks \cite{alligood_chaos_1996, guckenheimer_nonlinear_1983, Kuznetsov}, let $T \subseteq \mathbb{R}$ denote continuous time and $X \subseteq \mathbb{R}^M$ the state space of a DS. A DS is defined as a triple $(T, X, \bm{\Phi}_t)$, where $\bm{\Phi}_t : T \times X \to X$ is the flow map which describes the evolution of the DS' state across time $t\in T$ starting from $\bm{x}\in X$. For the common setting of continuous-time DS, the flow map may be thought of as the solution operator of a DS described by ODEs with parameters $\bm{p} \in \mathbb{R}^P$, 
\begin{equation}
    \dot {\bm{x}} = \bm{f}(\bm{x}, \bm{p}).
\label{eq:ode-family}
\end{equation}
While $\bm{p}$ may be continuous, in any empirical setting we only have access to discrete samples, forming a family of DS $\{\mathcal{S}^{(j)}\}_{j=1}^J$ with parameters $\bm{p}^{(j)}$. 

To clarify the idea of topological OODG, we need the concept of topological equivalence \cite{perko_differential_2001,durstewitz2026positiondynamicalsystemsperspective}: 
\begin{definition}[\textbf{Topological equivalence and conjugacy}]\label{def:Def_topoequiv}
Let $(\mathbb{R},A,\Phi)$ and $(\mathbb{R},B,\bm{\Psi})$ be two DS. These are said to be topologically equivalent if there is a homeomorphism $h:A\to B$ such that for each $\bm{x}_0 \in A$ we have $h(\Phi_t(\bm{x}_0))=\Psi_{\tau}(h(\bm{x}_0))$ with $\frac{d\tau(t,\bm{x}_0)}{dt}>0$ everywhere (i.e., if $h$ preserves the direction of flow). They are said to be topologically conjugate, if $\tau(t,\bm{x}_0)=t$ (i.e., if the parameterization by time is the same).
\end{definition}
By \textit{topological} OODG we thus mean that the model is able to generalize to dynamical regimes \textit{not topologically equivalent} to those seen in training. For instance, a bifurcation -- by definition -- implies a topological change in the state space properties \cite{perko_differential_2001}.

\subsection{Target families}
\label{sec:target_family}

For many physical systems the dependence on control parameters -- rates, couplings, forcings, concentrations -- is affine, as $\bm{p}$ enters multiplicatively into the interactions with mechanistic terms in the underlying equations. Canonical examples in this class include the Lorenz-63 \citep{lorenz_deterministic_1963} and Lorenz-96 \citep{lorenz_predictability_1996} models of atmospheric convection, the van-der-Pol \citep{cartwright1960balthazar}, Lotka--Volterra \citep{lotka1920analytical}, and Selkov \citep{selkov1968self} oscillators, the FitzHugh--Nagumo neuron model \citep{fitzhugh1961impulses}, the SIR epidemic model \citep{kermack1927contribution}, or almost any chemical reaction or population ecology system \cite{Murray2002MathBioI,Murray2003MathBioII}. 
Mathematically these systems follow a specific form of Eq.~\ref{eq:ode-family}, given by
\begin{equation}
\dot{\bm{x}} = \bm{f}_0(\bm{x}) + p\, \bm{f}_1(\bm{x})
\qquad
\bm{J}_{\text{cont}}(\bm{x}, p) = \frac{\partial \dot{\bm{x}}}{\partial \bm{x}} =  \bm{J}_0(\bm{x}) + p \, \bm{J}_1(\bm{x}),
\label{eq:true_system}
\end{equation}
where $\bm{x} \in \mathbb{R}^M$, $\bm{f}_0, \bm{f}_1 : \mathbb{R}^M \to \mathbb{R}^M$. For analytical clarity, we focus primarily on the scalar case ($p\in\mathbb{R}$), but we discuss the generalization to multi-dimensional parameters in Appx.~\ref{appx:multidim_parameters}.
Note that DS where $p$ enters nonlinearly (e.g., as $p^2$ or $\sqrt{p}$) into the vector field equations can still be analyzed within our framework after reparameterization ($p\to q:=p^2$ or $p\to q:=\sqrt{p}$, see Appx.~\ref{appx:multidim_parameters}).

\subsection{Hierarchical DSR with affine feature maps}
\label{sec:hierarchical_dsr}
To learn an entire family of DS rather than fitting separate models for each realization, hierarchical and context-conditioned approaches are the most common choice. Building on context-aware dynamics models \citep{brenner2024learning, kirchmeyer2022generalizing, vermani2024meta, yin2021leads}, we represent the parameters of a system-specific model via a shared affine mapping of low-dimensional features $\bm{l}^{(j)} \in \mathbb{R}^{1\times L}$, which may be learned \cite{yin2021leads, brenner2024learning, huh_context-informed_2025} from, or directly provided by, the context, while the mapping itself is defined by fixed group-level parameters $\bm{\theta}_c$, $\bm{\theta}_v$:
\begin{equation}
    \bm{\theta}^{(j)} = \bm{\theta}_c + \bm{l}^{(j)} \bm{\theta}_v = \bm{\theta}_c + \sum_{i=1}^L l^{(j)}_i \bm{\theta}_{v,i},
\label{eq:affine_map}
\end{equation}
Here, $\bm{\theta}_c \in \mathbb{R}^Q$ (centered parameters) and $\bm{\theta}_v \in \mathbb{R}^{L \times Q}$ (variation inducing weights) are shared across all systems, i.e., are independent of $\bm{l}$. $\bm{\theta}_v$ acts as a constant linear map that determines how changes in the latent features $\bm{l}$ translate into shifts in the model parameters $\bm{\theta}^{(j)}$.
Furthermore, it has been demonstrated \citep{kirchmeyer2022generalizing, brenner2024learning, vermani2024meta, huh_context-informed_2025} that such learned features are commonly connected to the true underlying control parameter $p$ via some injective function $\bm{g}$:
\begin{equation}
    \bm{l} = \bm{g}(p),
    \qquad
    \bm{g}:\mathbb{R} \to \mathbb{R}^{1\times L}.
    \label{eq:g}
\end{equation}

While these hierarchical approaches, which separate system-specific from group-level parameters, generally yield strong DSR performance and have a number of nice properties \citep{blanke2023interpretable, brenner2024learning, huh_context-informed_2025, kirchmeyer2022generalizing, yin2021leads}, they meet fundamental structural limitations w.r.t. OODG that are as yet largely unexplored. To analyze these limitations, we consider two representative architectural classes for modeling families of DS. These reflect two complementary perspectives on DSR: learning the flow map directly in discrete time using RNNs, and learning the underlying vector field in continuous time using Neural ODEs. For analytical clarity, we focus primarily on the scalar feature case ($L=1$), but we discuss the generalization to multi-dimensional features in Section~\ref{sec:problem-2}.

\textbf{RNNs (discrete-time flow approximation):}
\begin{equation}
\bm{z}_{t+1} = \bm{F}_{\bm{\theta}}(\bm{z}_t) = \bm{W} \bm{\psi}(\bm{z}_t) + \bm{h} \ ,
\qquad
\bm{J}^{\text{RNN}}_{\bm{\theta}}(\bm{z}_t) = \partial \bm{F}_{\bm{\theta}}(\bm{z}_t)/\partial \bm{z}_t = \bm{W} \,\mathrm{diag}\bigl(\bm{\psi}'(\bm{z}_t)\bigr)
\label{eq:rnn}
\end{equation}
Here, the mapping $\bm{F}_{\bm{\theta}}$ represents a learned approximation of the flow map $\bm{\Phi}$ acting on a latent state $\bm{z}_t \in \mathbb{R}^N$ at the sampled times $t_i=i \Delta t$, with weight parameters $\bm{W} \in \mathbb{R}^{N \times N}$, bias term $\bm{h} \in \mathbb{R}^N$, and an element-wise nonlinearity $\bm{\psi}$. We assume a minimal single-layer architecture for simplicity and as common in the DSR literature \cite{mikhaeil_difficulty_2022,hess_generalized_2023,brenner_almost_2024}, although the theoretical results generalize to more complex networks with hidden layers (Appx.~\ref{appx:feature_scalings}).

\textbf{Neural ODEs (continuous-time vector field approximation):}
\begin{equation}
\dot{\bm{z}}_t = \bm{f}_{\bm{\theta}}(\bm{z}_t) = \bm{W} \bm{\psi} (\bm{z}_t) + \bm{h} \ ,
\qquad
\bm{J}^{\text{NODE}}_{\bm{\theta}}(\bm{z}_t) = \partial \bm{f}_{\bm{\theta}} / \partial \bm{z}_t = \bm{W} \,\mathrm{diag}\bigl(\bm{\psi}'(\bm{z}_t)\bigr)
\label{eq:node}
\end{equation}
In contrast, Neural ODEs parameterize the vector field $\bm{f}_{\bm{\theta}}$ that generates the continuous-time dynamics. For direct comparability, we use the same network structure as for the RNN (Eq.~\ref{eq:rnn}).\footnote{Note that although in Eq. \ref{eq:node} $\bm{z}$ depends continuously on time, for notational consistency we write $t$ as an index.}

Applying the feature mapping (Eq.~\ref{eq:affine_map}) to the RNN and Neural ODE, we obtain the following affine-in-$l$ Jacobian, which governs its local growth attributes, called \textbf{dynamical scaling} in the following:
\begin{equation}
    \bm{J}_{\bm{\theta}} = (\bm{W}_c + l \bm{W}_v) \mathrm{diag}\bigl(\bm{\psi}'(\bm{z}_t)\bigr) \ ,
    \qquad
    \frac{\partial \bm{J}_{\bm{\theta}}}{\partial l} = \bm{W}_v \mathrm{diag}\bigl(\bm{\psi}'(\bm{z}_t) \bigr).
    \label{eq:J_RNN_NODE}
\end{equation}

A key observation now is how the location of the fixed points (FPs) $\bm{z}^*(l)$ depends on the latent feature, which we call \textbf{positional scaling} in the following (referring to a mere \textit{translation} of the vector field that does not alter the local Jacobian). For the discrete-time RNN, the FP must satisfy $\bm{z}^*(l) = \bm{F}_{\bm{\theta}}(\bm{z}^*(l), l)$. Differentiating both sides with respect to $l$ yields:
\begin{equation}
    \frac{d\bm{z}^*}{dl} = \frac{\partial \bm{F}_{\bm{\theta}}}{\partial \bm{z}} \frac{d\bm{z}^*}{dl} + \frac{\partial \bm{F}_{\bm{\theta}}}{\partial l} \implies \frac{d\bm{z}^*}{dl} = \left(\bm{I} - \bm{J}(l)\right)^{-1}\left.\frac{\partial \bm{F}_\theta}{\partial l}\right|_{\bm{z}^*},
    \label{eq:fp_RNN}
\end{equation}
where $\frac{\partial \bm{F}_\theta}{\partial l}$ does not explicitly depend on $l$ (Appx.~\ref{appx:flow_map_dependence}), and we assume $\left(\bm{I} - \bm{J}(l)\right)$ to be invertible, i.e., the Jacobian $\bm{J}(l)$ has no eigenvalue of $+1$, as otherwise we would not have an isolated finite FP anymore (the matrix becomes singular as the eigenvalue approaches 1). For the continuous-time Neural ODE, the equilibrium condition $0 = \bm{f}_{\bm{\theta}}(\bm{z}^*(l), l)$ leads to an analogous implicit derivative (assuming invertibility of $\bm{J}(l)$):
\begin{equation}
    0 = \frac{\partial \bm{f}_{\bm{\theta}}}{\partial \bm{z}} \frac{d\bm{z}^*}{dl} + \frac{\partial \bm{f}_\theta}{\partial l} \implies \frac{d\bm{z}^*}{dl} = - \bm{J}(l)^{-1}\left.\frac{\partial \bm{f}_{\bm{\theta}}}{\partial l}\right|_{\bm{z}^*}
    \label{eq:fp_NODE}
\end{equation}

Thus, the positional scaling depends on the inverse of the Jacobian $\bm{J}(l)$, which is itself a function of $l$. This means that the shift in FP location is mathematically entangled with the change in the system's Jacobian and thus Lyapunov spectrum, which determines the rate of expansion or contraction along different state space directions. This is the basis for the structural bottleneck we formally analyze in Section~\ref{sec:problem-2}.

\section{Structural analysis of affine hierarchical models}
\label{sec:problems}

\subsection{Structural overparameterization and Jacobian sparsity}
\label{sec:problem-1}
In physical systems, a control parameter $p$ typically governs a specific, isolated mechanism, resulting in a sparse derivative of the Jacobian. For instance, the Rayleigh number $\rho$ affects a single entry of the Lorenz-63 Jacobian~\cite{lorenz_deterministic_1963}, the infection rate $\beta$ influences two entries in SIR models~\cite{kermack1927contribution}, and other models such as the Selkov oscillator~\cite{selkov1968self} or the Lorenz-96 system~\cite{lorenz_predictability_1996} have control parameters representing constant inflow or forcing which do not enter the Jacobian at all (i.e., $\bm{J}_1=\bm{0}$). Likewise, in all of the most common bifurcation normal forms the dependence on control parameters is extremely low-dimensional \cite{Kuznetsov}. In contrast, unconstrained affine feature mappings can introduce spurious, dense parameter dependencies that corrupt the topological structure when extrapolated.

\begin{proposition}[\textbf{Perturbations due to dense affine parameter-dependence}]\label{prop:prop_1}
Let the true Jacobian's dependence on $p$ be affine, $\bm{J}_{\text{true}}(p) = \bm{J}_0 + p \bm{J}_{1}$. We assume $\bm{J}_{1} \in \mathcal{S}$, where $\mathcal{S} \subset \mathbb{R}^{N \times N}$ is a $k$-dimensional subspace consisting of matrices that share a fixed, sparse support pattern ($k \ll N^2$), and $\mathcal{S}^\perp$ is its complement. If the DSR model's feature-coupling matrix $\bm{\theta}_v$ (Eq.~\ref{eq:affine_map}) corresponding to the model Jacobian is full-rank and unregularized, it injects $(N^2 - k)$ unconstrained degrees of freedom into the feature-dependent Jacobian $\hat{\bm{J}}_1$. These spurious components lie orthogonal to the true sparse matrix subspace $\mathcal{S}$, allowing an unbounded $\mathcal{O}(|p|)$ spectral divergence outside the training domain.
\end{proposition}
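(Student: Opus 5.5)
The plan is to make the statement precise by decomposing the feature-dependent model Jacobian orthogonally with respect to the Frobenius inner product on $\mathbb{R}^{N\times N}$. By Eq.~\ref{eq:J_RNN_NODE} with $L=1$, the model Jacobian reads $\hat{\bm{J}}(l)=\hat{\bm{J}}_0+l\,\hat{\bm{J}}_1$, where $\hat{\bm{J}}_1=\bm{W}_v\,\mathrm{diag}(\bm{\psi}'(\bm{z}))$. If $\bm{\theta}_v$ is full rank and unregularized, $\bm{W}_v$ can be any matrix in $\mathbb{R}^{N\times N}$. Pointwise in $\bm{z}$, as long as $\bm{\psi}'(\bm{z})$ has no zero entries, the map $\bm{W}_v\mapsto \bm{W}_v\mathrm{diag}(\bm{\psi}'(\bm{z}))$ is a linear bijection. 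Hence $\hat{\bm{J}}_1$ ranges over all of $\mathbb{R}^{N\times N}$, and I write $\hat{\bm{J}}_1=\bm{P}_{\mathcal{S}}\hat{\bm{J}}_1+\bm{P}_{\mathcal{S}^\perp}\hat{\bm{J}}_1$. Here $\mathcal{S}^\perp$ is identified with the matrices supported on the complementary index set, which for a support-pattern subspace is exactly the Frobenius-orthogonal complement. It has dimension $N^2-k$; this gives the degree-of-freedom count in the statement.

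Next I will argue that training data does not constrain the $\mathcal{S}^\perp$ component. I will connect $l$ to $p$ via Eq.~\ref{eq:g}, linearizing $\bm{g}$ around the training range so that $l\approx a+bp$. The training loss only sees parameters $p\in[p_{\min},p_{\max}]$. A fit reproducing $\bm{J}_{\text{true}}(p)$ on this interval up to error $\varepsilon$ forces $\|\hat{\bm{J}}(l(p))-\bm{J}_{\text{true}}(p)\|_F\le\varepsilon$ only there. I then construct a family of equally good fits $\hat{\bm{J}}_1^{(\bm{E})}=\hat{\bm{J}}_1+\bm{E}$ with $\bm{E}\in\mathcal{S}^\perp$. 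To keep the in-domain fit, I compensate the intercept by $-\bar l\,\bm{E}$, where $\bar l$ is the center of the training range, and restrict the deviation to $|l-\bar l|\,\|\bm{E}\|_F\le\varepsilon$ on the training range. In general the data constrains $\bm{E}$ only through finitely many (noisy) samples. Without a sparsity penalty, nothing pins $\bm{E}$ to zero, and any $\bm{E}$ that is small times the training width is admissible. This shows that the $N^2-k$ directions are "unconstrained".

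For the divergence claim, I will evaluate the difference outside the domain: $\hat{\bm{J}}(l(p))-\bm{J}_{\text{true}}(p)=(l(p)-\bar l)\bm{E}+\mathcal{O}(\varepsilon)$, which grows like $|b|\,|p|\,\|\bm{E}\|$. The Frobenius norm therefore diverges linearly. To turn this into a spectral statement, I use Weyl's inequality for singular values, $|\sigma_i(\bm{A}+\bm{B})-\sigma_i(\bm{A})|\le\|\bm{B}\|_2$. This bound is an upper bound, so I also need a lower bound showing the divergence is actually attained. I will obtain it by choosing $\bm{E}$ as a rank-one matrix supported off $\mathcal{S}$, for example a single entry $e_{ij}$ with $(i,j)\notin\mathrm{supp}$. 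The largest singular value of $\bm{J}_0+p\bm{J}_1+c\,p\,e_{ij}$ then grows at rate at least $|c|\,|p|-\mathcal{O}(1)$ relative to the true one whenever the true system grows slower, or generally differs by $\Theta(|p|)$ along the chosen direction. The eigenvalues of the true system and the model will likewise separate. Since the result states "allowing", exhibiting one admissible $\bm{E}$ with $\Theta(|p|)$ deviation is sufficient.

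The main obstacle is making the "unbounded $\mathcal{O}(|p|)$ spectral divergence" rigorous for eigenvalues rather than singular values, because eigenvalues of nonnormal matrices need not move linearly under perturbation. I would handle this by stating the result for the operator norm and singular spectrum, which governs the local expansion rates relevant to Lyapunov exponents. I would then remark that for eigenvalues, asymptotically $\lambda(\bm{J}(p))/p\to\lambda(\bm{J}_1+\bm{E}')$ by continuity of eigenvalues applied to $\bm{J}/p$. This gives linear divergence whenever $\mathrm{spec}(\bm{J}_1+\bm{E}')\neq\mathrm{spec}(\bm{J}_1)$, a condition that holds for generic $\bm{E}'\in\mathcal{S}^\perp$.
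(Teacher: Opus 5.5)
Your proof is correct. It shares the paper's skeleton: the Frobenius-orthogonal split $\hat{\bm{J}}_1=\hat{\bm{J}}_1^{\parallel}+\hat{\bm{J}}_1^{\perp}$ with $\dim\mathcal{S}^\perp=N^2-k$, and an out-of-domain Jacobian error driven by a fixed $\mathcal{S}^\perp$-matrix times a factor linear in $p$. You handle both the ``unconstrained'' step and the spectral step differently.

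\textbf{The ``unconstrained'' step.} The paper posits $l=\alpha p$, exact recovery $\hat{\bm{J}}_0=\bm{J}_0$ and $\alpha\hat{\bm{J}}_1^{\parallel}=\bm{J}_1$, and a leftover $\|\hat{\bm{J}}_1^{\perp}\|_F=\epsilon>0$, so the error is exactly $\alpha p\,\hat{\bm{J}}_1^{\perp}$. But this error is already present inside the training range. Moreover, exact Jacobian matching at two distinct values of $l$ would force $\hat{\bm{J}}_1^{\perp}=\bm{0}$, so ERM is not really indifferent to it. Your centered perturbation $(\hat{\bm{J}}_0-\bar l\bm{E},\,\hat{\bm{J}}_1+\bm{E})$ makes the actual slack explicit. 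It is an $(N^2-k)$-dimensional ball of near-optimal fits of radius about $\varepsilon/w$, where $w$ is the half-width of the training range in $l$. The price is a correspondingly small, but nonzero, slope.

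\textbf{The spectral step.} The paper applies Bauer--Fike to a diagonalizable $\bm{J}_{\mathrm{true}}(p)=\bm{V}\bm{\Lambda}\bm{V}^{-1}$, giving $|\tilde\lambda-\lambda|\le\kappa(\bm{V})|\alpha p|\epsilon$. This is only an upper bound, so ``allowing'' is read as ``not excluded''. It also needs diagonalizability, and $\kappa(\bm{V})$ depends on $p$ without control. It blows up, e.g., for Lorenz-63, whose $\bm{J}_1$ is a single off-diagonal, hence nilpotent, entry. Your Weyl bound combined with the rescaling argument $\mathrm{spec}(\bm{J}(p))/p\to\mathrm{spec}(\bm{J}_1+b\bm{E})$ shows that a $\Theta(|p|)$ deviation is actually attained, with no diagonalizability assumption. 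So you prove a strictly stronger statement.

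\textbf{Two points to tighten.}

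First, your single-entry bound $\sigma_{\max}\ge|c||p|-\|\bm{J}_0\|_2$ beats the true top singular value only if $|c|>\|\bm{J}_1\|_2$. That clashes with $|c|$ being at most of order $\varepsilon/w$ unless $\bm{J}_1=\bm{0}$. Use Frobenius orthogonality instead: $\sum_i\sigma_i(\bm{J}_1+b\bm{E})^2=\|\bm{J}_1\|_F^2+b^2\|\bm{E}\|_F^2$, so some $\sigma_i(\bm{J}_1+b\bm{E})>\sigma_i(\bm{J}_1)$. Weyl then turns this into a $\Theta(|p|)$ gap in that singular value for \emph{every} nonzero $\bm{E}\in\mathcal{S}^\perp$.

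Second, eigenvalue genericity genuinely needs the sparsity hypothesis. With $\bm{J}_1=\bm{0}$ and $\mathcal{S}^\perp$ the strictly upper-triangular matrices, $\bm{J}_1+b\bm{E}$ is always nilpotent. For $k<N(N+1)/2$, there is either an off-support diagonal entry or an off-support pair $(i,j),(j,i)$. Then $t\,\bm{e}_{ii}$ or $t(\bm{e}_{ij}+\bm{e}_{ji})$ changes $\mathrm{tr}$ or $\mathrm{tr}(\cdot^2)$, respectively. Hence the spectrum-preserving $\bm{E}$ form a proper algebraic subset, so generic $\bm{E}$ in your small admissible ball work, as you need.
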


\textit{Proof sketch (Full proof in Appx.~\ref{appx:proof_prop1}.)}$\;$ The model's learned feature-dependent Jacobian term $\hat{\bm{J}}_1$ can be decomposed into the sum of $\hat{\bm{J}}_{1}^{\parallel}\in\mathcal{S}$ and $\hat{\bm{J}}_{1}^{\perp}\in\mathcal{S}^\perp$. Empirical risk minimization (ERM) on finite training data determines $\hat{\bm{J}}_{1}^{\parallel}$, but leaves the $(N^2 - k)$-dimensional orthogonal complement $\hat{\bm{J}}_{1}^{\perp}$ underdetermined. If left unregularized, dense weight matrices $\bm{\theta}_v$ (Eq.~\ref{eq:affine_map}) with $\hat{\bm{J}}_{1}^{\perp}\neq 0$ perturb $\hat{\bm{J}}_1$ out of the $k$-dimensional target space $\mathcal{S}$. By the Bauer--Fike theorem \cite{bauer1960norms}, moving the feature $l \propto p$ outside of the training domain produces linear amplification of such perturbations, which permits spurious eigenvalues to cross the imaginary axis or unit circle and may cause false bifurcations.

\paragraph{Solution: Sparsity and low-rank constraints}
To directly prevent the orthogonal perturbation established in Proposition~\ref{prop:prop_1}, we must restrict the dimensionality of the learned feature-coupling Jacobian $\hat{\bm{J}}_1$. Because true physical systems typically restrict control parameter effects to specific, sparse mechanistic pathways ($k \ll N^2$), dense parameter dependencies in the DSR model formulation allowing $\hat{\bm{J}}_{1}^{\perp}\neq0$ should be actively penalized. Specifically, we propose a low-rank constraint (e.g., with target rank $k'$) jointly with an $\mathcal{L}_1$ regularization on the feature-coupling parameters $\bm{\theta}_v$ (Eq.~\ref{eq:affine_map}) which is added to the training objective:
\begin{align}
    \mathcal{L}_{\mathrm{reg}} = \lambda_\mathrm{reg}\|\bm{\theta}_v\|_1,
    \qquad
    \mathrm{where}\ \bm{\theta}_v = \bm{U}\bm{V}^\top;\ \bm{U} \in \mathbb{R}^{L \times k'},\ \bm{V} \in \mathbb{R}^{Q \times k'}
    \label{eq:sparsity_lr}
\end{align}

Unlike standard ERM, which leaves excess degrees of freedom underdetermined, incorporating this sparsity prior into the training objective actively penalizes the model for populating the physically spurious $(N^2 - k)$-dimensional orthogonal complement $\hat{\bm{J}}_{1}^{\perp}$, and thereby suppresses the unbounded $\mathcal{O}(|p|)$ spectral divergence Proposition~\ref{prop:prop_1} suggests. Crucially, we leave the centered parameters $\bm{\theta}_c$ (Eq.~\ref{eq:affine_map}) unregularized. As $\bm{\theta}_c$ only affects the feature-independent $\hat{\bm{J}}_0$ term of the Jacobian and does not interact with the feature $l$, it does not contribute to the perturbation responsible for the spectral divergence. This targeted regularization thus encourages the training algorithm to align the model's feature-coupling weights almost exclusively with the true sparse matrix subspace.

\paragraph{Empirical validation} Figure~\ref{fig:lorenz_node}A shows a comparison of single-hidden-layer Neural ODEs (Appx.~\ref{appx:used_models_cont}) with and without the proposed sparsity and low-rank constraints, trained on the Lorenz-63 fixed-point regime. To predict dynamics for unseen control parameters, we fit an algebraic function to the latent features learned during training and extrapolate this mapping to the new domain (see Appx.~\ref{appx:feature_extrap} for the full pipeline).

While both models exhibit comparable in-domain performance (measured via Wasserstein-1 distance; Appx.~\ref{appx:evaluation}), their out-of-domain behavior drastically differs. The unconstrained model, exhibiting a dense Jacobian-feature dependence $\partial\bm{J}_\theta/\partial\rho$ (Fig.~\ref{fig:lorenz_node}B, top), suffers from the predicted spectral divergence and fails at the transition to chaos ($\rho\approx24.74$). Conversely, the regularized model maintains a sparser dependence (Fig.~\ref{fig:lorenz_node}B, bottom), successfully suppressing spurious behavior and recreating the ground truth dynamics far beyond the training boundaries. Discrete-time models (Appx.~\ref{appx:disc_model}) exhibit similar behavior (Fig.~\ref{fig:bif_medians_lorenz_shplrnn}, Appx.~\ref{appx:additional_results}).

\begin{figure}[ht!]
    \centering
    \includegraphics[width=\linewidth]{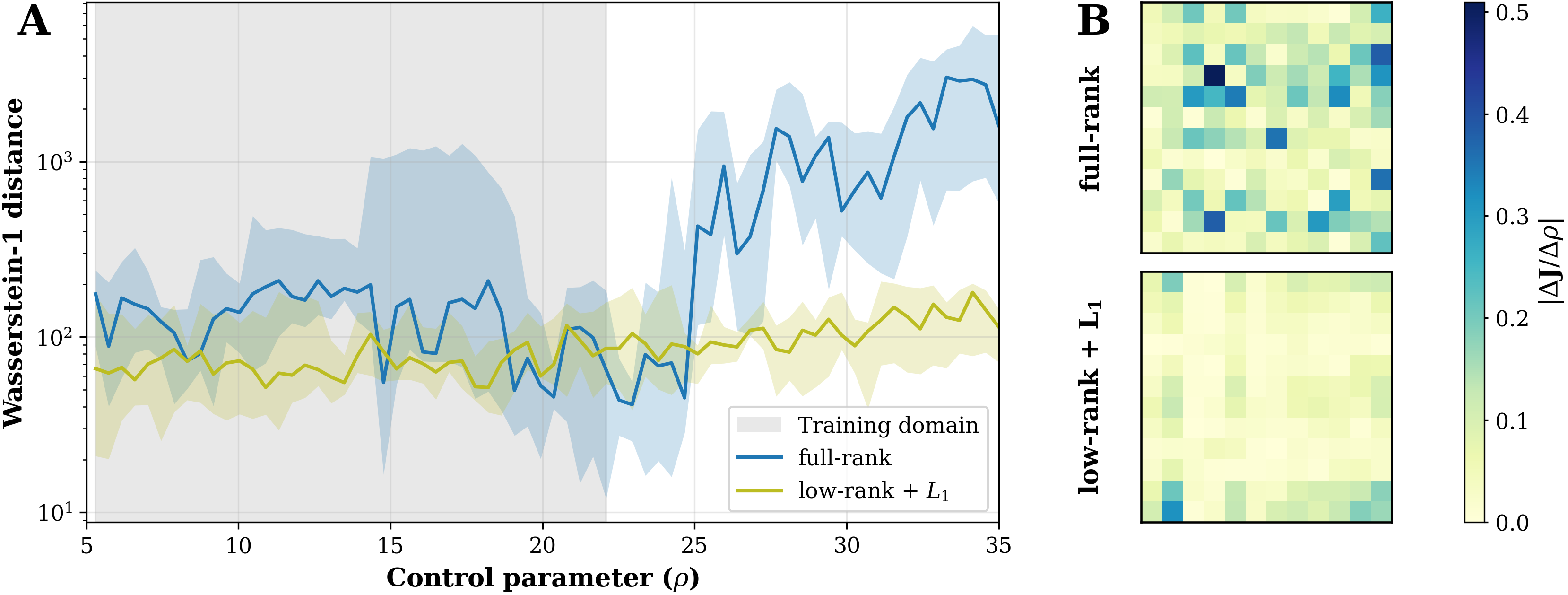}
    \caption{Comparison of low-rank-\&-sparsity-regularized DSR model with standard hierarchization trained on the Lorenz-63 DS with control parameter $\rho$ in the two-fixed-point regime and extrapolated into the chaotic regime. \textbf{(A)} Disagreement in attractor geometry as assessed by Wasserstein-1 distance between true and model-generated bifurcation graphs (Appx.~\ref{appx:evaluation}) for the hierarchical Neural ODE with and w/o low-rank \& sparsity constraints (Eq.~\ref{eq:sparsity_lr}) as a function of $\rho$. Curves are medians across $n=10$ trained models with error bands indicating the inter-quartile range. \textbf{(B)} Change of model Jacobian entries with $\rho$ for the unconstrained (top) and the low-rank-\&-sparsity-regularized model (bottom).}
    \label{fig:lorenz_node}
\end{figure}

\subsection{Asymptotic entanglement of positional and dynamical scaling}
\label{sec:problem-2}

For real physical systems undergoing changes in control parameters, 
the location of attractors (like stable FPs) and the rates of con-/divergence as determined by the Jacobians often vary independently from each other (Fig.~\ref{fig:figure1}B). However, forcing all model parameters to depend on a single feature (Eq.~\ref{eq:affine_map}) algebraically entangles these two properties, as made explicit by the following theorem comparing their asymptotic growth in the limit of large control-parameter values:

\begin{theorem}[\textbf{Asymptotic entanglement in affine feature mappings}]\label{theorem:theorem_1}
Assume that for a true continuous-time DS with affine dependence on a control parameter (Eq.~\ref{eq:true_system}) the asymptotic scaling of a FP and its local Jacobian for $p \to \infty$ is decoupled. Specifically, let their norms follow $\|\bm{x}^*_{\text{true}}(p)\| = \Theta(p^m)$ and $\|\bm{J}_{\text{true}}(p)\| = \Theta(p^n)$ with $m,n\in\mathbb{R}; m \neq n$. Consider a generic DSR model $\bm{F}_{\bm{\theta}}(\bm{z}, l)$ (Eq.~\ref{eq:rnn}) or $\bm{f}_{\bm{\theta}} (\bm{z},l)$ (Eq.~\ref{eq:node}) with parameters strictly affine in $l$, which is coupled to $p$ via $l=g(p)$ (Eq.~\ref{eq:g}), and a bounded or polynomially-bounded activation function $\bm{\psi}$. Then such a model cannot simultaneously satisfy both asymptotic scaling laws, resulting in diverging OOD errors.
\end{theorem}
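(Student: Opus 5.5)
The plan is to show that, once $\bm{\theta}$ is affine in $l$ and $l=g(p)$, the growth of $\|\bm{z}^*\|$ and the growth of $\|\bm{J}\|$ are both controlled by the single scalar function $g$ and the activation $\bm{\psi}$, so their exponents cannot be chosen independently. I treat the Neural ODE (Eq.~\ref{eq:node}) in detail. The RNN case follows by replacing $\bm{J}$ with $\bm{I}-\bm{J}$ in Eq.~\ref{eq:fp_RNN}, which leaves the asymptotics unchanged when $\|\bm{J}\|\to\infty$. Genericity is needed for the step ``model exponents are locked to $a$'' below. I will state it explicitly: $\bm{W}_v\neq 0$, $\bm{h}_v\neq 0$, and $\bm{\theta}_c,\bm{\theta}_v$ do not lie on the measure-zero set where the $l$-dependence cancels exactly.

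\textbf{Step 1 (Jacobian side).} The model should reproduce the true system, so $\bm{z}^*(l)$ tracks $\bm{x}^*_{\text{true}}(p)$ and $\hat{\bm{J}}(l)$ at $\bm{z}^*$ tracks $\bm{J}_{\text{true}}(p)$. By Eq.~\ref{eq:J_RNN_NODE}, $\hat{\bm{J}}(l)=(\bm{W}_c+l\bm{W}_v)\,\mathrm{diag}(\bm{\psi}'(\bm{z}^*))$. Suppose $|g(p)|=\Theta(p^a)$; this follows from the injectivity of $g$ in Eq.~\ref{eq:g} plus regularity of growth. Then, for generic weights,
\[
\|\hat{\bm{J}}\|=\Theta\bigl(p^{\max(a,0)}\,\sigma(p)\bigr),\qquad \sigma(p):=\|\mathrm{diag}(\bm{\psi}'(\bm{z}^*(p)))\|.
\]
For a bounded activation, $\sigma$ is bounded. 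For a polynomially bounded activation of degree $d$, $\sigma(p)=\mathcal{O}(\|\bm{z}^*\|^{d-1})=\mathcal{O}(p^{m(d-1)})$. This gives the relation $n\le \max(a,0)+m(d-1)$, with equality generically.

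\textbf{Step 2 (positional side).} I use the equilibrium condition $(\bm{W}_c+l\bm{W}_v)\bm{\psi}(\bm{z}^*)+\bm{h}_c+l\bm{h}_v=0$ together with the implicit derivative of Eq.~\ref{eq:fp_NODE}:
\[
\frac{d\bm{z}^*}{dl}=-\hat{\bm{J}}^{-1}\bigl(\bm{W}_v\bm{\psi}(\bm{z}^*)+\bm{h}_v\bigr).
\]
For bounded $\bm{\psi}$, the equilibrium condition forces $\bm{\psi}(\bm{z}^*)\to-\bm{W}_v^{+}\bm{h}_v$, a fixed saturation value, up to $\mathcal{O}(1/l)$. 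This makes the FP either converge or escape only along saturating directions. Along those directions $\bm{\psi}'\to0$, which ties the Jacobian decay to the FP escape rate. Integrating $\|d\bm{z}^*/dl\|\lesssim\|\hat{\bm{J}}^{-1}\|\,(1+\|\bm{z}^*\|^d)$ in $l$ then yields a growth law for $\|\bm{z}^*\|$ expressed through the same $a$ and $\sigma$. Eliminating $a$ and $\sigma$ between Steps~1 and~2 should give a fixed relation between $m$ and $n$ depending only on $d$. In the polynomially bounded case it has the schematic form $n-m = \text{const}(d)$; in the bounded case it is a degenerate form where at most one of the two quantities can grow polynomially. A true system with $m\neq n$ chosen freely (as in Fig.~\ref{fig:figure1}B) then violates this relation. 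Because both norms are $\Theta(\cdot)$ with mismatched exponents, the discrepancy $\|\hat{\bm{J}}-\bm{J}_{\text{true}}\|$ or $\|\bm{z}^*-\bm{x}^*_{\text{true}}\|$ diverges polynomially in $p$. This gives the claimed diverging OOD error.

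\textbf{Main obstacle.} The hard step is controlling $\sigma(p)=\|\bm{\psi}'(\bm{z}^*(p))\|$ along the escaping FP. Two complications arise. First, for saturating activations $\bm{\psi}'$ decays at an activation-specific rate (exponential for $\tanh$, zero for ReLU beyond the kink). Second, $\hat{\bm{J}}$ may be non-normal, so $\|\hat{\bm{J}}^{-1}\|\neq 1/\|\hat{\bm{J}}\|$. My first attempt is a case split on whether $\|\bm{z}^*\|$ stays bounded or diverges.

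If $\|\bm{z}^*\|$ stays bounded, then $m\le0$ and $n=\max(a,0)$ is forced directly. I then show that the equilibrium condition pins $a$ so that $m$ and $n$ cannot be independent.

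If $\|\bm{z}^*\|$ diverges, I restrict to the asymptotically active coordinates, where $\bm{\psi}'$ is bounded away from zero for piecewise-linear or polynomial activations. On these coordinates both norms become explicit powers of $p$, and I use only the two-sided bounds $\sigma_{\min}\le\|\cdot\|\le\sigma_{\max}$ on the fixed matrices $\bm{W}_c,\bm{W}_v$.
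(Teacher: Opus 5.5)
There is a genuine gap, and it sits in the step you leave schematic. You plan to ``eliminate $a$ and $\sigma$'' to get a relation of the form $n-m=\mathrm{const}(d)$, and in Case~1 you claim the equilibrium condition ``pins $a$''. No such relation exists, and your own Step~2 shows why. Take bounded $\bm{\psi}$ and generic weights ($\bm{W}_v$ invertible, $\bm{h}_v\neq 0$). Dividing the equilibrium condition by $l$ gives $\bm{\psi}(\bm{z}^*)=-\bm{W}_v^{-1}\bm{h}_v+\mathcal{O}(1/l)$, however fast $l=g(p)$ grows. Suppose this limit lies in the interior of the range of a strictly monotone $\bm{\psi}$ such as $\tanh$; this is an open condition on the weights, so it survives any measure-zero exclusion. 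Then $\bm{z}^*$ converges, $\bm{\psi}'(\bm{z}^*)$ stays bounded away from zero, and $\|\hat{\bm{J}}\|=\Theta(|l|)=\Theta(p^a)$ with $a$ entirely free. The model therefore realizes $(m,n)=(0,a)$ for every $a>0$. The positional exponent is pinned, the dynamical one is not, and there is nothing to eliminate. Concretely, $\dot z=-l\tanh z+l\tanh 1$ with $l=p$ has $z^*=1$ and $|\hat J|=p\,\mathrm{sech}^2 1$. This reproduces the scalings $m=0$, $n=1$ of the affine system $\dot x=p(1-x)$. So the conclusion cannot be obtained from $m\neq n$ alone. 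Your closing inference also fails in your own bounded case, since ``at most one of the two quantities grows polynomially'' is compatible with $m=0<n$.

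What is provable is a constraint on $m$ by itself, and that is what the paper's proof establishes. First, matching $\|\bm{J}_{\text{true}}\|=\Theta(p^n)$ forces $|l|=\Theta(p^n)$. For bounded $\bm{\psi}$, the paper integrates $\|d\bm{z}^*/dl\|\le\|\bm{J}(l)^{-1}\|\,\|\bm{W}_v\bm{\psi}(\bm{z}^*)+\bm{h}_v\|=\mathcal{O}(1/|l|)$ to get $\|\bm{z}^*\|=\mathcal{O}(\ln|l|)$. For ReLU-like $\bm{\psi}$, it solves the equilibrium in the linear regime to get $\bm{z}^*(l)\to-(\bm{W}_v\bm{P})^{-1}\bm{h}_v$, i.e.\ $\mathcal{O}(1)$. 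The contradiction is then drawn only against a true FP with $m\neq 0$, effectively $m>0$.

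Your observation $\bm{\psi}(\bm{z}^*)\to-\bm{W}_v^{+}\bm{h}_v$ is the right tool for this. It is the same $1/l$ cancellation as the paper's ReLU limit. It also does not need the bound $\|\bm{J}(l)^{-1}\|=\mathcal{O}(1/|l|)$, which is delicate when $\bm{\psi}'(\bm{z}^*)\to0$. The repair is to use it to show that $\|\bm{z}^*\|$ stays bounded, or grows at most sub-polynomially in boundary cases, and then conclude failure for $m>0$, $n>0$. Do not look for an $m$--$n$ relation.

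Separately, your RNN reduction ``replace $\bm{J}$ by $\bm{I}-\bm{J}$'' is valid only when $\|\hat{\bm{J}}\|\to\infty$. In $\bm{z}^*=\bm{W}(l)\bm{\psi}(\bm{z}^*)+\bm{h}(l)$ the left-hand side carries no factor $l$. So the FP can escape linearly along saturated directions while $\bm{\psi}'\to0$. For example, $z=l(\tanh z+1)$ has $z^*\approx 2l$ with Jacobian tending to $0$. That regime needs a separate argument.
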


\textit{Proof sketch (Full proof in Appx.~\ref{appx:proof_thm1}.)}$\;$ Matching the true dynamical scaling strictly determines the features $l(p)$ due to the explicit affine dependence in Eq.~\ref{eq:J_RNN_NODE}. Because the network coefficients $\bm{W}(l)$ and $\bm{h}(l)$ share this same scaling with $l$ (Eq.~\ref{eq:affine_map}), their structural coupling restricts the FP equations. The positional scaling is determined by the partial derivative $\partial \bm{f}_{\bm{\theta}}/\partial l$ or $\partial \bm{F}_{\bm{\theta}}/\partial l$, respectively, (assuming invertibility of $\bm{J}(l)$ or $(\bm{I}-\bm{J}(l))$, respectively; see Eq.~\ref{eq:fp_RNN} and Eq.~\ref{eq:fp_NODE}), which lacks the degrees of freedom to scale independently of the Jacobian. The only remaining channel through which $\bm{z}^*$ could decouple from $\bm{J}_{\bm{\theta}}$ is the derivative of the activation function $\bm{\psi}'(\bm{z}^*(l))$; restricting $\bm{\psi}$ to be bounded or polynomially bounded constrains the FP $\|\bm{z}^*(l)\|$ to scale at best logarithmically with $l$, so the FP is forced onto a constrained asymptotic path in parameter space and cannot grow independently (i.e., $m$ is strictly coupled to $n$).

\begin{corollary}[\textbf{Finite-distance extrapolation divergence}]\label{cor:cor_1_1}
(Proof in Appx.~\ref{appx:proof_cor1_1}.) Assume the dynamical scaling relations are correctly captured, i.e., the learned Jacobian scales linearly with $p$. Because the model's parameterization is strictly affine, the minimal singular value of its Jacobian grows proportionally, $\sigma_{\min}(\bm{J}_{\text{model}}(p)) \geq c|p|$ (for some constant $c > 0$). Consequently, the norm of the inverse Jacobian decays asymptotically as $\mathcal{O}(|p|^{-1})$. By integrating the fixed-point derivative $\partial \bm{z}^*/\partial p$ (Eqs.~\ref{eq:fp_NODE}, \ref{eq:fp_RNN}), we find that if the activation function $\bm{\psi}$ is bounded (e.g., tanh, \texttt{sigmoid}), the model's FP can grow at best logarithmically, $\|\bm{z}^*(p)\|_2 \leq \mathcal{O}(\ln |p|)$. If $\bm{\psi}$ is instead polynomially bounded of degree 1 (e.g., ReLU, GELU), growth eventually halts and $\|\bm{z}^*(p)\|_2 = \Theta(1)$. This architectural bottleneck prevents the independent tuning of positional and dynamical scaling. In either case, because the model's FP path is structurally constrained to a different functional class than that for the true physical system, their derivatives at the training boundaries $[p_{\text{min}}, p_{\text{max}}]$ (i.e., the range of control parameter values experienced during training) cannot perfectly align, even if the model and true positional scaling matched in-domain. This mismatch incurs a non-zero spatial truncation error for any finite extrapolation $\Delta p > 0$, resulting inevitably in topological reconstruction failure.
\end{corollary}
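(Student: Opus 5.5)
The plan is to work with the Neural ODE case, Eq.~\ref{eq:fp_NODE}, and treat the RNN case, Eq.~\ref{eq:fp_RNN}, the same way with $\bm{J}$ replaced by $\bm{I}-\bm{J}$. We take $l=g(p)$ affine, so that $dl/dp$ is constant. We also absorb constants so that the learned Jacobian is $\bm{J}_{\text{model}}(p)=(\bm{W}_c+p\bm{W}_v)\,\mathrm{diag}(\bm{\psi}'(\bm{z}^*))$. This is the assumption that the dynamical scaling is captured linearly, i.e.\ $n=1$ in the language of Theorem~\ref{theorem:theorem_1}.

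\textbf{Step 1: lower bound on $\sigma_{\min}$.} For large $|p|$ we write $\bm{W}_c+p\bm{W}_v=p(\bm{W}_v+p^{-1}\bm{W}_c)$. Correct linear dynamical scaling together with isolated fixed points forces $\bm{W}_v\,\mathrm{diag}(\bm{\psi}')$ to be nonsingular along the path. Weyl's inequality for singular values then gives $\sigma_{\min}(\bm{J}_{\text{model}})\ge |p|\,\sigma_{\min}(\bm{W}_v D)-\|\bm{W}_c D\|\ge c|p|$ for $|p|$ beyond some $p_0$. The resulting bound is $\|\bm{J}^{-1}\|_2\le (c|p|)^{-1}$.

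\textbf{Step 2: bound on the fixed-point derivative.} The partial derivative $\partial\bm{f}_{\bm\theta}/\partial l=\bm{W}_v\bm{\psi}(\bm{z}^*)+\bm{h}_v$ does not depend on $l$ explicitly (Appx.~\ref{appx:flow_map_dependence}). Eq.~\ref{eq:fp_NODE} therefore gives
\[
\Bigl\|\frac{d\bm{z}^*}{dp}\Bigr\|\le \frac{\|\bm{W}_v\|\,\|\bm{\psi}(\bm{z}^*)\|+\|\bm{h}_v\|}{c|p|}.
\]
We then split into the two activation classes.
\begin{itemize}
\item For bounded $\bm{\psi}$ the numerator is a constant $C$. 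Integrating from $p_0$ to $p$ gives $\|\bm{z}^*(p)\|\le\|\bm{z}^*(p_0)\|+(C/c)\ln(|p|/p_0)=\mathcal{O}(\ln|p|)$.
\item For $\bm{\psi}$ polynomially bounded of degree 1, Grönwall only yields the weaker bound $\|\bm{z}^*\|=\mathcal{O}(|p|^{a})$. To reach the sharper claim $\Theta(1)$, we use that ReLU/GELU-type activations are asymptotically linear on each orthant. Once $\bm{z}^*$ leaves a compact set, the activation pattern freezes, so $\bm{\psi}(\bm{z})=D\bm{z}+\bm{b}$. The fixed-point equation becomes linear, $(\bm{W}_c+p\bm{W}_v)(D\bm{z}^*+\bm{b})+\bm{h}_c+p\bm{h}_v=0$, whose solution is a ratio of matrix polynomials in $p$ of equal degree. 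Hence $\bm{z}^*(p)\to$ const, and $\|\bm{z}^*\|=\Theta(1)$.
\end{itemize}

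\textbf{Step 3: mismatch at the training boundary.} The true path has $\|\bm{x}^*_{\text{true}}\|=\Theta(p^m)$ with $m\ne n$. It lies in a function class (a power law, or an algebraic function of $p$) that differs from the model's class (logarithmic, or rational with a finite limit). Suppose the model path $\bm{z}^*$ agreed with $\bm{x}^*_{\text{true}}$ to all orders at $p_{\max}$. Both paths are real-analytic in $p$: the true one by the implicit function theorem on polynomial/analytic $\bm{f}$, and the model's one via analytic $\bm{\psi}$ or the piecewise-rational regime. Analytic continuation would then make them coincide on a neighbourhood and hence globally, contradicting the differing asymptotics. So some derivative of order $k$ differs at $p_{\max}$. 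A Taylor expansion then gives $\|\bm{z}^*(p_{\max}+\Delta p)-\bm{x}^*_{\text{true}}(p_{\max}+\Delta p)\|=\kappa\,\Delta p^{k}+o(\Delta p^k)\neq0$ for all small $\Delta p>0$. For larger $\Delta p$, the same conclusion follows from analyticity, since the zeros of the difference are isolated. Because the FP location defines the attractor geometry, this error is a geometric mismatch of the reconstructed attractor.

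\textbf{Main obstacle.} The delicate points are Step 1 and Step 3. In Step 1, uniform nonsingularity of $\bm{W}_v\mathrm{diag}(\bm{\psi}'(\bm{z}^*))$ requires $\bm{\psi}'$ bounded away from zero along the path, which fails for saturating $\bm{\psi}$. I would handle this by restricting to the regime where the matched linear Jacobian scaling already implies the bound, taking it as a hypothesis as the corollary does. In Step 3, the analyticity argument is not available for ReLU (only piecewise analytic). I would first try to work within a single activation region near $p_{\max}$, where the model path is rational in $p$ and the argument goes through. Finally, the step from FP displacement to a full failure of topological equivalence does not follow from this analysis alone, and I would state it only as a geometric (spatial) reconstruction error.
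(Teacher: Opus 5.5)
Your overall route is the paper's. The proof in Appx.~\ref{appx:proof_cor1_1} imports the $\mathcal{O}(\ln|l|)$ and $\mathcal{O}(1)$ fixed-point bounds from the proof of Theorem~\ref{theorem:theorem_1}, obtained by integrating $\|d\bm{z}^*/dl\|=\mathcal{O}(|l|^{-1})$ and from the linear-regime limit $-(\bm{W}_v\bm{P})^{-1}\bm{h}_v$ respectively. It transfers them through $l=\Theta(p)$, and then asserts that paths from different function classes first disagree at some derivative order $\kappa\ge2$ at $p_{\max}$, giving an $\mathcal{O}(\Delta p^{\kappa})$ error. Your identity-theorem argument for why some derivative must differ makes explicit what the paper only asserts. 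You are also right to stop at a geometric error, since the paper's proof itself only reaches ``potentially inducing spurious bifurcations.''

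The Weyl estimate in Step~1 is fine, but its premise is false. Isolatedness of the fixed point gives invertibility of $\bm{J}(p)$ (resp.\ $\bm{I}-\bm{J}(p)$), not of the leading coefficient $\bm{W}_v\,\mathrm{diag}(\bm{\psi}')$. Take $\bm{W}_c=-\bm{I}$, $\bm{W}_v=-\mathrm{diag}(1,0)$ and $\bm{\psi}'\equiv1$. Then $\bm{J}=-\mathrm{diag}(1+p,1)$, so $\|\bm{J}\|=\Theta(p)$ and both $\bm{J}$ and $\bm{I}-\bm{J}$ are invertible for $p>-1$, yet $\sigma_{\min}(\bm{J})=1$ for $p\ge0$. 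Hence $\sigma_{\min}(\bm{J})\ge c|p|$ does not follow from linear growth of $\|\bm{J}\|$ and must be an explicit hypothesis. The paper also only asserts it, and it fails precisely for the sparse, low-rank $\hat{\bm{J}}_1$ favored in Section~\ref{sec:problem-1}.

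Once you assume it, it resolves your worry about saturating activations and supplies the missing justification for the ReLU case of Step~2. Write $D=\mathrm{diag}(\bm{\psi}'(\bm{z}^*))$. Since $\|\bm{W}_c+p\bm{W}_v\|\le C|p|$ and $\sigma_{\min}(\bm{W}D)\le\|\bm{W}\|\,\sigma_{\min}(D)$, the hypothesis forces $\min_i|\psi'(z^*_i)|\ge c/C$.
\begin{itemize}
\item For tanh or sigmoid this keeps $\bm{z}^*$ bounded, which is sharper than $\mathcal{O}(\ln|p|)$.
\item For ReLU it forces the all-active pattern and $\sigma_{\min}(\bm{W}_v)\ge c$ for large $|p|$. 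The fixed-point equation is then exactly linear with a finite limit. This, not leaving a compact set, is what freezes the pattern.
\end{itemize}

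In Step~3 your contradiction needs $m>0$, because $\mathcal{O}(\ln|p|)$ and a finite limit are both compatible with $\Theta(p^m)$ for $m\le0$. For $m\le0$ the two paths can coincide exactly. The system $\dot{\bm{x}}=-(1+p)(\bm{x}-\bm{x}_0)$, with $\bm{x}_0$ in the positive orthant, has $n=1$ and $m=0$. The ReLU model $\dot{\bm{z}}=-(1+p)\,\mathrm{ReLU}(\bm{z})+(1+p)\bm{x}_0$ agrees with it on that orthant, so it has the same fixed point and Jacobian for all $p>-1$. The paper's ``in general with $m\neq0$'' has the same hole.

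Finally, isolated zeros give a nonzero error for all small $\Delta p>0$ and for all but a discrete set of larger $\Delta p$. They do not give it for every $\Delta p$, and that weaker conclusion is the most one can prove.
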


\begin{corollary}[\textbf{Spectral divergence in multi-feature setup}]\label{cor:cor_1_2}
Assume the affine mapping (Eq.~\ref{eq:affine_map}) with a \textit{multi-dimensional} feature vector $\bm{l} \in \mathbb{R}^L$, and a true physical system governed by a scalar control parameter $p$. While an overparameterized feature space ($L > 1$) can empirically decouple positional and dynamical scaling on a training domain, extrapolation of $\bm{l}(p)$ outside the training domain (where $\|\bm{l}(p)\|_2 \to \infty$) introduces perturbations orthogonal to the true physical subspace. The underdetermined $L-1$ degrees of freedom permit an unbounded $\mathcal{O}(|p|)$ divergence of the model's spectral radius, which produces reconstruction failures.
\end{corollary}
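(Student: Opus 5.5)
The argument mirrors the proof strategy of Proposition~\ref{prop:prop_1}, with the role of the $(N^2-k)$ spurious matrix directions taken over by the $L-1$ spurious feature directions. Write the feature trajectory as $\bm{l}(p)$ and observe that the model Jacobian is affine in $\bm{l}$. By Eq.~\ref{eq:J_RNN_NODE} generalized to $L$ features, $\hat{\bm{J}}(\bm{l}) = \bigl(\bm{W}_c + \sum_i l_i \bm{W}_{v,i}\bigr)\mathrm{diag}(\bm{\psi}'(\bm{z}))$. Since the true system depends on a single scalar $p$, only one direction in feature space, call it $\bm{u}\in\mathbb{R}^{L}$, corresponds to the physical direction $\bm{J}_1$. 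Decompose $\bm{l}(p) = \alpha(p)\bm{u} + \bm{l}^{\perp}(p)$ with $\bm{l}^\perp \perp \bm{u}$. Correspondingly, split the feature-dependent Jacobian term into $\hat{\bm{J}}_1^{\parallel}$, generated along $\bm{u}$, and $\hat{\bm{J}}_1^{\perp}(\bm{l}^\perp) = \sum_i l^\perp_i \bm{W}_{v,i}\,\mathrm{diag}(\bm{\psi}')$, which spans an up to $(L-1)$-dimensional family of matrix perturbations.

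First, I will argue that ERM on $[p_{\min},p_{\max}]$ only constrains the realized values $\hat{\bm{J}}(\bm{l}(p))$ for $p$ in the training range. This is exactly the freedom that allows positional and dynamical scaling to be decoupled in-domain. However, it leaves the extrapolation of $\bm{l}^\perp(p)$ undetermined. Concretely, if the in-domain curve $\bm{l}(p)$ is not contained in the line $\mathbb{R}\bm{u}$, which is precisely what is needed to decouple the FP location from the Jacobian, then any algebraic extrapolation of $\bm{l}(p)$ (Appx.~\ref{appx:feature_extrap}) generically has $\|\bm{l}^\perp(p)\|_2 = \Theta(|p|)$ whenever $\|\bm{l}(p)\|_2\to\infty$ linearly. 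The degenerate case $\bm{l}^\perp\equiv \bm{0}$ reduces to $L=1$, where Theorem~\ref{theorem:theorem_1} and Corollary~\ref{cor:cor_1_1} already show that decoupling is impossible. So either decoupling fails, or a nonzero orthogonal component is present.

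Second, I will apply the Bauer--Fike theorem as in Proposition~\ref{prop:prop_1}. Treat $\bm{J}_{\text{true}}(p)$, matched along $\bm{u}$, as the unperturbed matrix, and treat $\bm{E}(p) = \hat{\bm{J}}_1^\perp(\bm{l}^\perp(p))$ as the perturbation. For bounded $\bm{\psi}'$, $\|\bm{E}(p)\|\le \|\bm{l}^\perp(p)\|_2 \max_i\|\bm{W}_{v,i}\|\,\|\bm{\psi}'\|_\infty = \mathcal{O}(|p|)$. Nothing in the training loss constrains the $\bm{W}_{v,i}$ restricted to $\bm{l}^\perp$-directions to cancel outside the domain, so generically $\|\bm{E}(p)\|=\Theta(|p|)$. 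For the lower bound on spectral divergence, pick a generic realization in which $\bm{E}(p)=p\,\bm{E}_1+\mathcal{O}(1)$ and $\bm{E}_1$ is not spanned by $\bm{J}_1$. The spectral radius of $\bm{J}_0+p(\bm{J}_1+\bm{E}_1)$ then differs from that of $\bm{J}_0+p\bm{J}_1$ by a quantity linear in $|p|$, since the leading-order eigenvalues are $p$ times the eigenvalues of $\bm{J}_1+\bm{E}_1$ versus those of $\bm{J}_1$. This difference crosses the unit circle or imaginary axis at finite $p$, producing spurious bifurcations.

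The main obstacle is making ``generically'' precise. One must show that the set of trained parameters for which the orthogonal contribution cancels at all extrapolated $p$ is non-generic. I would try to show this via a measure-zero or codimension argument: cancellation requires $\sum_i l^\perp_i(p)\bm{W}_{v,i}\in\mathrm{span}(\bm{J}_1)$ for all large $p$, which imposes $N^2-1$ linear constraints on the $\bm{W}_{v,i}$ that ERM does not enforce. Also, $\bm{\psi}'(\bm{z})$ depends on $\bm{z}$; I will evaluate at a fixed point $\bm{z}_0$, as in Fig.~\ref{fig:figure1}, to keep the diagonal factor constant.
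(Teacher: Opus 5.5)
\textbf{Main gap: ERM does constrain the transverse weights.} Your route differs from the paper's. The paper splits each coupling matrix $\hat{\bm{J}}_{1,i}$ in \emph{matrix} space against the sparse subspace $\mathcal{S}$, as in Proposition~\ref{prop:prop_1}. It asserts that ERM leaves the $\mathcal{S}^\perp$ components free, and then combines $\|\bm{E}(p)\|_2\ge\|\bm{E}^\perp(p)\|_2$, where $\bm{E}^\perp(p)=\sum_i l_i(p)\hat{\bm{J}}_{1,i}^\perp$, with Bauer--Fike. You instead split \emph{feature} space into $\bm{u}$ and $L-1$ transverse directions.

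The step that fails is your claim that ERM leaves the transverse coupling weights unconstrained, and with it the codimension argument you propose for making ``generically'' precise. Take $L=2$, so $\bm{l}^\perp(p)=\beta(p)\bm{v}$ and your perturbation is $\beta(p)\bm{B}$ with $\bm{B}:=\sum_i v_i\bm{W}_{v,i}\mathrm{diag}(\bm{\psi}'(\bm{z}_0))$. Your cancellation condition ``$\beta(p)\bm{B}\in\mathrm{span}(\bm{J}_1)$ for all large $p$'' does not depend on $p$; it is simply $\bm{B}\in\mathrm{span}(\bm{J}_1)$. But the in-domain fit you assume reads $\hat{\bm{J}}_0+\alpha(p)\hat{\bm{J}}_u+\beta(p)\bm{B}=\bm{J}_0+p\bm{J}_1$ at the training values of $p$. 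Project this onto the orthogonal complement of $\mathrm{span}(\bm{J}_1)$. Decoupling requires $1,\alpha,\beta$ to be linearly independent on the training set, i.e.\ the in-domain curve lies on no affine line. If $\beta$ were constant or affine in $\alpha$, it would be absorbed into $\hat{\bm{J}}_0,\hat{\bm{J}}_u$, the model would be effectively single-feature in-domain, and nothing would be decoupled. Under this independence, the projected equation forces exactly $\bm{B}\in\mathrm{span}(\bm{J}_1)$. So ERM \emph{does} impose your $N^2-1$ constraints, and the extrapolation error is then parallel to $\bm{J}_1$ for every $p$, with no transverse part at all.

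Your dichotomy ``either decoupling fails, or a nonzero orthogonal component is present'' therefore has it backwards. The branch in which $\bm{l}^\perp$ is genuinely excited in-domain is precisely the one in which ERM pins the transverse weights. For general $L$ the same projection shows that coupling weights can be free only in two places:
\begin{itemize}
\item along feature directions in which the in-domain curve is affinely degenerate, or nearly so (e.g.\ $l_1\propto p$ and $l_2\propto\sqrt{p-p_{\mathrm{crit}}}$ on a short interval), which gives ill-conditioning rather than exact freedom;
\item at states not visited during training.
\end{itemize}
A valid argument has to be built on one of these and must show that the extrapolated $\bm{l}(p)$ leaves the affine hull of its in-domain values. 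The paper's proof merely asserts the corresponding freedom of the $\mathcal{S}^\perp$ components and is open to the same objection, but the specific closure you propose cannot work.

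\textbf{Orthogonality to $\mathcal{S}$.} Your $\perp$ is orthogonality in $\mathbb{R}^L$, so nothing shows that $\hat{\bm{J}}_1^\perp(\bm{l}^\perp)$ has a nonzero component in $\mathcal{S}^\perp$. The condition $\bm{E}_1\notin\mathrm{span}(\bm{J}_1)$ that you use is weaker than this when $k>1$. Moreover, a $\bm{u}$ with $\sum_i u_i\hat{\bm{J}}_{1,i}\propto\bm{J}_1$ need neither exist nor be unique. To get the statement's ``perturbations orthogonal to the true physical subspace'' you need the paper's projection of $\bm{E}(p)$ onto $\mathcal{S}^\perp$.

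\textbf{Spectral lower bound.} This step goes beyond the paper, whose Bauer--Fike argument only shows that drift is \emph{permitted}. It requires the spectral radii of $\bm{J}_1+\bm{E}_1$ and $\bm{J}_1$ to differ, and that does not follow from $\bm{E}_1\notin\mathrm{span}(\bm{J}_1)$. For the Lorenz-63 coupling $\bm{J}_1=\bm{e}_2\bm{e}_1^\top$ and $\bm{E}_1=\bm{e}_3\bm{e}_1^\top\in\mathcal{S}^\perp$, both $\bm{J}_1$ and $\bm{J}_1+\bm{E}_1$ are nilpotent. In this nilpotent case, which is typical of the single-entry dependencies the paper emphasizes, the true eigenvalues grow only sublinearly (like $\sqrt{p}$ for Lorenz-63). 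So ``$p$ times the eigenvalues of $\bm{J}_1$'' is not the leading order. The generic hypothesis you actually need is that $\bm{J}_1+\bm{E}_1$ is not nilpotent, which does then yield a $\Theta(|p|)$ gap.
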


\emph{Proof sketch (Full proof in Appx.~\ref{appx:proof_cor1_2}.)}$\;$ To satisfy conflicting algebraic scaling requirements for a FP, $\|\bm{z}^*\|_2 = \Theta(p^m)$, and the Jacobian, $\|\bm{J}\|_2 = \Theta(p^n)$ with $m \neq n$, the feature mapping $\bm{l}(p)$ must utilize at least two independent dimensions. However, because the true parameter dependence is strictly one-dimensional, the feature-coupling matrix block mapping $\bm{l}$ to the model parameters (Eq.~\ref{eq:affine_map}) is physically underdetermined. Because standard ERM strictly optimizes for in-domain loss without structural priors, it leaves these excess dimensions unconstrained, allowing them to retain non-zero weights. When extrapolating outside the training domain, these spurious feature combinations grow linearly. Analogous to Proposition~\ref{prop:prop_1}, this unconstrained growth inserts dense, non-physical matrix blocks into the model's extrapolated Jacobian. By the Bauer--Fike theorem~\cite{bauer1960norms}, this causes spurious eigenvalues to emerge, which drives the spectral radius to diverge and introduces artificial bifurcations.

\paragraph{Solution: Orthogonal feature splitting}
To resolve the asymptotic entanglement established in Theorem~\ref{theorem:theorem_1} without creating the spectral divergence through unconstrained overparameterization (see Corollary~\ref{cor:cor_1_2}), we introduce \textit{feature splitting}. Instead of projecting a single latent feature onto all parameters or naively expanding the feature dimension, we structurally partition the parameter space into two mutually exclusive sets.

We assign an independent dynamical feature $l_{\text{dyn}}\in\mathbb{R}$ strictly to parameters that govern local growth rates (the Jacobians), and a positional feature $l_{\text{pos}}\in\mathbb{R}$ strictly to parameters that shift the vector field without altering its local derivative:
\begin{align}
    P_{\text{dyn}} &= \left\{\bm{\theta}_i: \frac{\partial \bm{J}_{\text{model}}}{\partial \bm{\theta}_i} \neq \bm{0}\right\}, &
    P_{\text{pos}} &= \left\{\bm{\theta}_i: \frac{\partial \bm{J}_{\text{model}}}{\partial \bm{\theta}_i} = \bm{0}\right\}
    \label{eq:criteria_dyn_pos}
\end{align}

The standard feature mapping in hierarchical models (Eq.~\ref{eq:affine_map}) is thereby partitioned into two affine maps for both classes of parameters:
\begin{align}
    \bm{\theta}_i(l_\mathrm{dyn}, l_\mathrm{pos}) = \begin{cases}
        \bm{\theta}_{c,i} + l_\mathrm{dyn} \bm{\theta}_{v,i}, & \text{if } \bm{\theta}_i \in P_{\text{dyn}} \\
        \bm{\theta}_{c,i} + l_\mathrm{pos} \bm{\theta}_{v,i}, & \text{if } \bm{\theta}_i \in P_{\text{pos}}
    \end{cases}
\end{align}

This targeted structural intervention directly resolves these architectural bottlenecks. First, decoupling $l_{\text{pos}}$ from the Jacobian mapping releases the positional scaling from its logarithmic bound (Corollary~\ref{cor:cor_1_1}), enabling it to track the system's true algebraic growth $\Theta(p^m)$. Second, keeping the domains of $l_{\text{dyn}}$ and $l_{\text{pos}}$ mutually exclusive prevents unconstrained degrees of freedom in $\hat{\bm{J}}_1$, thereby avoiding orthogonal perturbation and spectral divergence (Corollary~\ref{cor:cor_1_2}).

By enforcing this architectural prior, the model is freed to learn completely independent nonlinear relationships to the underlying control parameter (e.g., $l_{\text{dyn}} \propto p$ while $l_{\text{pos}} \propto \sqrt{p - p_{crit}}$ for Lorenz-63). This naturally accommodates the heterogeneous scaling with physical properties as required for true OOD extrapolation, functioning as a robust structural guardrail across various DSR architectures (including RNNs and Neural ODEs).

\paragraph{Empirical validation} Trained on multiple control-parameters of the Selkov system (see Appx.~\ref{appx:selkov}) in the fixed-point regime, both discrete-time and continuous-time models (see Appx.~\ref{appx:used_models} for specifics) \emph{with} feature splitting significantly outperform standard models without splitting in the OOD region, where they not only correctly predict the onset of the limit cycle but also the second Hopf bifurcation back to FP dynamics, thus achieving true topological OODG (Fig.~\ref{fig:selkov-shplrnn}B). Furthermore, note that simply increasing the feature dimension of conventional hierarchical models does not have the same effect (Fig.~\ref{fig:selkov-shplrnn}C).

\begin{figure}[ht!]
    \centering
    \includegraphics[width=\linewidth]{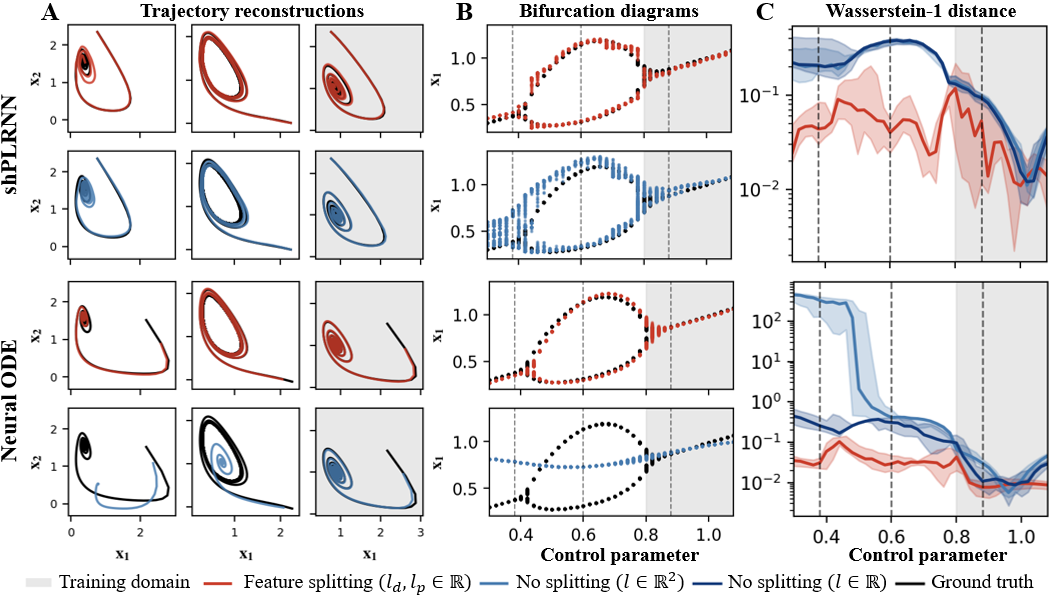}
    \caption{Reconstruction of the Selkov bifurcation structure by models with and without feature splitting. \textbf{(A)} Representative trajectory reconstructions at three distinct control parameter values marked by vertical lines in B and C. \textbf{(B)} Reconstructed bifurcation diagrams (runs with lowest overall Wasserstein-1 distance for each setting). \textbf{(C)} Wasserstein-1 distance between true and reconstructed bifurcation diagrams (Appx.~\ref{appx:evaluation}) for models trained with ($\bm{l}\in\mathbb{R}^1$) and without ($\bm{l}\in\mathbb{R}^1,\mathbb{R}^2$) feature splitting. Solid lines are medians across 10 runs and error bands indicate the interquartile range. The shaded area indicates the training domain.}
    \label{fig:selkov-shplrnn}
\end{figure}

\subsection{Nonlinear scaling constraints in continuous-to-discrete time mapping}
\label{sec:nonlin_scaling}

Mapping a continuous-time parameterized system $\dot{\bm{x}} = \bm{f}(\bm{x}, p)$ to a discrete-time flow map (as is required for RNNs and other discrete flow approximations, but not for Neural ODEs) transforms a fundamentally affine parameter dependence into a nonlinear one. Let the continuous-time Jacobian be $\bm{J}_{\text{cont}}(\bm{x}, p) = \bm{J}_0(\bm{x}) + p\bm{J}_1(\bm{x})$, where $p$ is the scalar control parameter. The Jacobian of the true discrete-time flow map $\boldsymbol{\Phi}_{\Delta t}$ can be expanded for small $\Delta t$ (details in Appx.~\ref{appx:proof_thm2}):
\begin{equation}
    \bm{J}_{\text{disc}}(\bm{x}, p) = \bm{I} + (\bm{J}_0 + p\bm{J}_1)\Delta t + \frac{1}{2}\left( (\bm{J}_0 + p\bm{J}_1)^2 + \dot{\bm{J}}_0 + p\dot{\bm{J}}_1 \right) \Delta t^2 + \mathcal{O}(\Delta t^3).
    \label{eq:disc_jac}
\end{equation}
This true discrete flow Jacobian $\bm{J}_{\text{disc}}(\bm{x}, p)$ thus traces out a curved, quadratic path parameterized by $p$. We formalize the limitation this imposes on affine discrete-time DSR models (such as RNNs) in the following theorem.

\begin{theorem}[\textbf{Geometric mismatch in discrete-time affine models}]\label{theorem:theorem_2}
Assume the true continuous-time system has a strictly affine Jacobian dependence on the control parameter $p$ (Eq.~\ref{eq:true_system}). The parameters of a discrete-time DSR model utilizing an affine feature mapping $\bm{J}_{\text{model}}(l) = \hat{\bm{J}}_{0} + l \hat{\bm{J}}_{1}$, where the feature scales as $l = g(p)$ as in Eq.~\ref{eq:g} (or $l_{\text{dyn}} = g(p)$ for models with feature splitting), are mathematically restricted to a one-dimensional straight line on the matrix subspace, i.e., to matrices in $\mathbf{R}^{N\times N}$ fully parameterized by a single scalar. This architectural constraint strictly prevents the model from capturing the true system's quadratic parameter dependence, inducing an irreducible truncation error $\mathcal{O}(p^2 \Delta t^2)$ during extrapolation.
\end{theorem}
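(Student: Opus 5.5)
The plan is to compare two curves in matrix space, both viewed as functions of $p$ at a fixed state $\bm{x}$. The first is the true discrete-time Jacobian $\bm{J}_{\text{disc}}(\bm{x},p)$ from Eq.~\ref{eq:disc_jac}. The second is the model Jacobian $\bm{J}_{\text{model}}(g(p)) = \hat{\bm{J}}_0 + g(p)\hat{\bm{J}}_1$.

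First I will regroup Eq.~\ref{eq:disc_jac} by powers of $p$:
\begin{equation*}
\bm{J}_{\text{disc}} = \bm{A}_0 + p\,\bm{A}_1 + p^2\,\bm{A}_2 + \mathcal{O}(\Delta t^3),
\end{equation*}
with coefficients
\begin{align*}
\bm{A}_0 &= \bm{I} + \bm{J}_0\Delta t + \tfrac12(\bm{J}_0^2+\dot{\bm{J}}_0)\Delta t^2,\\
\bm{A}_1 &= \bm{J}_1\Delta t + \tfrac12(\bm{J}_0\bm{J}_1+\bm{J}_1\bm{J}_0+\dot{\bm{J}}_1)\Delta t^2,\\
\bm{A}_2 &= \tfrac12\bm{J}_1^2\Delta t^2 .
\end{align*}
Separately, the model curve $\{\hat{\bm{J}}_0 + l\hat{\bm{J}}_1 : l\in\mathbb{R}\}$ lies in the affine line $\hat{\bm{J}}_0 + \mathrm{span}\{\hat{\bm{J}}_1\}$. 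This holds for any $g$, injective or not, and it is the ``one-dimensional straight line'' claim; by construction only $l_{\text{dyn}}$ enters the Jacobian, so the same holds with feature splitting.

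Next I will show that the true curve is generically not contained in any such line. Suppose $\bm{A}_0 + p\bm{A}_1 + p^2\bm{A}_2 \in \hat{\bm{J}}_0 + \mathrm{span}\{\hat{\bm{J}}_1\}$ for three distinct values of $p$. Taking differences gives $p\bm{A}_1 + p^2\bm{A}_2$ in a one-dimensional space for two nonzero values of $p$. Then $\bm{A}_1$ and $\bm{A}_2$ are linearly dependent, which at leading order in $\Delta t$ means $\bm{J}_1^2 \parallel \bm{J}_1$. I will state the nondegeneracy assumption explicitly: $\bm{J}_1^2 \notin \mathrm{span}\{\bm{J}_1\}$, i.e.\ $\bm{J}_1$ is not a scaled idempotent or nilpotent. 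This assumption is necessary. For example, if $\bm{J}_1$ has a single nonzero entry off the diagonal (Lorenz $\rho$), then $\bm{J}_1^2=0$ and the quadratic term vanishes at order $\Delta t^2$. In that case I would fall back to the $\dot{\bm{J}}_1$ and higher-order terms, or state the theorem for generic $\bm{J}_1$.

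To get the quantitative bound, let $\Pi$ be the orthogonal projector onto the orthogonal complement of $\mathrm{span}\{\hat{\bm{J}}_1\}$ in Frobenius inner product. For any choice of $l$,
\begin{equation*}
\|\bm{J}_{\text{disc}} - \bm{J}_{\text{model}}(l)\|_F \ \ge\ \|\Pi(\bm{J}_{\text{disc}} - \hat{\bm{J}}_0)\|_F = \|\Pi\bm{A}_0' + p\,\Pi\bm{A}_1 + p^2\,\Pi\bm{A}_2\|_F .
\end{equation*}
Suppose the model fits in-domain. Then $\hat{\bm{J}}_1$ is aligned with $\bm{A}_1$ up to $\mathcal{O}(\Delta t^2)$ errors fixed by training, and $\Pi\bm{A}_2$ has norm $c\,\Delta t^2$ with $c>0$ by the nondegeneracy assumption. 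A polynomial in $p$ with nonzero leading coefficient then gives a lower bound $\ge c\,p^2\Delta t^2 - \mathcal{O}(|p|\Delta t^2) - \mathcal{O}(\Delta t^3)$ as $|p|$ grows. The matching upper bound $\mathcal{O}(p^2\Delta t^2)$ comes from choosing $l$ to match $\bm{A}_1$. Together these show the irreducible error is exactly of order $p^2\Delta t^2$.

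The main obstacle is making ``irreducible'' rigorous. The model could choose $g$ and $\hat{\bm{J}}_1$ adversarially, for instance tilting the line to interpolate the secant over the training interval. I would handle this with the three-point argument above: any line meets the parabola in at most two parameter values, so its deviation grows quadratically outside a bounded set. Fixing $\Delta t$-uniform constants, and handling the case where $\bm{J}_1^2 \parallel \bm{J}_1$, needs care.
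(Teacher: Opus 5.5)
Your argument is correct once your nondegeneracy hypothesis is added, but it takes a different route from the paper's.

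\textbf{Shared starting point.} Both proofs start from the same expansion in powers of $p$: your $\bm{A}_0,\bm{A}_1,\bm{A}_2$ are the paper's $\bm{M}_0$, $\bm{M}_1$ and $\tfrac12\bm{J}_1^2\Delta t^2$. The paper derives Eq.~\ref{eq:disc_jac} inside the proof, from the time-ordered exponential via the Peano--Baker series and a Taylor expansion of $\bm{J}(s)$ at $s=0$. You import it from the main text instead.

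\textbf{The paper's route.} The paper argues by coefficient matching. For $l=\alpha p$ it assumes ERM sets $\hat{\bm{J}}_0\approx\bm{M}_0$ and $\alpha\hat{\bm{J}}_1\approx\bm{M}_1$, which gives $\bm{E}(p)=-\tfrac12p^2\bm{J}_1^2\Delta t^2$. For a smooth injective $g$ it Taylor-expands at $p=0$ and obtains the residual $\tfrac12p^2\bigl[\tfrac{g''(0)}{g'(0)}\bm{M}_1-\bm{J}_1^2\Delta t^2\bigr]$. It concludes that the curvature of $g$ can only add multiples of $\bm{M}_1$. This buys an explicit formula for the leading residual and shows exactly how $g''$ enters.

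\textbf{Your route.} Your projection $\Pi$ annihilates the entire model line, so it bounds the error from below for every choice of $l$ at every $p$. This has three advantages:
\begin{itemize}
\item it covers arbitrary $g$, including non-smooth and non-injective ones;
\item it needs no expansion about $p=0$, which need not lie anywhere near the training interval;
\item it disposes of lines tilted to fit the secant over the training range.
\end{itemize}

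\textbf{The nondegeneracy hypothesis.} You require $\bm{A}_1,\bm{A}_2$ to be linearly independent, i.e.\ at leading order $\bm{J}_1^2\notin\mathrm{span}\{\bm{J}_1\}$. This is precisely what the paper's final step (``$\bm{J}_1^2$ does not occur in this mapping'') tacitly uses, and it is genuinely needed:
\begin{itemize}
\item for Lorenz-63 in $\rho$ one has $\bm{J}_1^2=\bm{0}$;
\item if $\bm{J}_1^2=\mu\bm{J}_1$, the choice $g''(0)/g'(0)=\mu\Delta t$ cancels the leading residual. For instance, for $\dot x=px$ the map $g(p)=e^{p\Delta t}$ is exact.
\end{itemize}

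\textbf{Two caveats, both inherited from the paper.}
\begin{itemize}
\item The remainder is not uniform in $p$. The series contains $p^3\Delta t^3$ and higher terms; the paper itself writes $\mathcal{O}(p^3\Delta t^3)$. So your $-\mathcal{O}(\Delta t^3)$ should read $-\mathcal{O}((1+|p|)^3\Delta t^3)$, and the quadratic lower bound is only meaningful for $1\ll|p|\ll 1/\Delta t$.
\item $\dot{\bm{J}}$ is a derivative along the $p$-dependent flow $\bm{f}_0+p\bm{f}_1$. Hence the true $\bm{A}_2$ also contains $\tfrac12\,D\bm{J}_1(\bm{x})[\bm{f}_1(\bm{x})]\,\Delta t^2$, the derivative of $\bm{J}_1$ in direction $\bm{f}_1$. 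Likewise $\bm{A}_1$ picks up $\tfrac12\,D\bm{J}_0(\bm{x})[\bm{f}_1(\bm{x})]\,\Delta t^2$. Your argument goes through unchanged if the nondegeneracy condition is imposed on the corrected $\bm{A}_2$. The Lorenz and $\dot x=px$ examples are unaffected, because $\bm{J}_1$ is constant there.
\end{itemize}
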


\textit{Proof sketch (Full proof in Appx.~\ref{appx:proof_thm2}.)}$\;$
By optimizing the model Jacobian to match the first-order terms of the true discrete-time Jacobian (Eq.~\ref{eq:disc_jac}), the model captures the linear parameter dependence: $\hat{\bm{J}}_{0} \approx \bm{I} + \bm{J}_0 \Delta t + \frac{1}{2}(\bm{J}_0^2 + \dot{\bm{J}}_0) \Delta t^2$ and $\hat{\bm{J}}_{1} \approx \bm{J}_1 \Delta t + \frac{1}{2}(\bm{J}_0\bm{J}_1 + \bm{J}_1\bm{J}_0 + \dot{\bm{J}}_1)\Delta t^2$. However, the affine model lacks higher-order feature terms (e.g., $l^2$ or $l_\mathrm{dyn}^2$). Consequently, it cannot reproduce the quadratic matrix term $\frac{1}{2} p^2 \bm{J}_1^2 \Delta t^2$ required by the true discrete-time system. While this structural omission can be masked during in-domain interpolation because standard ERM only optimizes for local training loss, extrapolating outside the training domain inevitably amplifies the spatial mismatch caused by this missing quadratic curvature, placing a strict bound on the model's structural fidelity.

This mismatch cannot be resolved by the previously introduced solutions of sparsity constraints or feature splitting. However, understanding the structural truncation error allows us to derive a closed-form mathematical diagnostic for the reliable extrapolation range.

\begin{corollary}[Reliable extrapolation range]\label{cor:cor_2_1}
Define the second-order residual matrix
\begin{align}
    \bm{R}_{\text{est}}(l):=\tfrac12(\hat{\bm{J}}_{0}-\bm{I}+l\hat{\bm{J}}_{1})^2,
    \qquad
    \mathcal{I}_\varepsilon:=\{\,l:\|\bm{R}_{\text{est}}(l)\|\le \varepsilon\,\}.
\end{align}
For any tolerance \(\varepsilon>0\), \(\mathcal{I}_\varepsilon\) is the maximum reliable extrapolation set. Outside \(\mathcal{I}_\varepsilon\), the leading \(\mathcal{O}(p^2\Delta t^2)\) term from Theorem~\ref{theorem:theorem_2} exceeds the tolerance level $\varepsilon$.
\end{corollary}

\textit{Derivation.} Matching the $\mathcal{O}(\Delta t)$ terms in the proof of Theorem~\ref{theorem:theorem_2} gives $\hat{\bm{J}}_{0} \approx \bm{I} + \bm{J}_0\Delta t$ and $l\hat{\bm{J}}_{1} \approx p\bm{J}_1\Delta t$, hence $\bm{J}_{\text{cont}}\Delta t \approx \hat{\bm{J}}_{0}-\bm{I}+l\hat{\bm{J}}_{1}$ when the first-order generator is recovered correctly. The leading quadratic contribution from Theorem~\ref{theorem:theorem_2} is $\tfrac12(\bm{J}_{\text{cont}}\Delta t)^2$, which translates to $\bm{R}_{\text{est}}(l)=\tfrac12(\hat{\bm{J}}_{0}-\bm{I}+l\hat{\bm{J}}_{1})^2$. Bounding $\|\bm{R}_{\text{est}}(l)\|\le\varepsilon$ then reduces to a scalar quadratic inequality in $l$, with boundary values $(l_{\min}, l_{\max})$ determined by $\hat{\bm{J}}_{0}$, $\hat{\bm{J}}_{1}$, at which the tolerance is reached.

\section{Conclusions}
\label{sec:discussion_conclusion}

We identified three fundamental structural discrepancies that prevent existing feature-based hierarchical DSR approaches from extrapolating reliably across bifurcations: dense Jacobian coupling where real systems are sparse, entangled positional and dynamical scaling under single-feature parameterizations (as used in recent literature~\cite{yin2021leads, kirchmeyer2022generalizing, brenner2024learning, huh_context-informed_2025}), and geometric mismatches introduced by continuous-to-discrete time mappings.
By introducing L1 sparsity regularization and feature splitting (decoupling positional and dynamical features), we resolve the conflicting scaling requirements and restrict the model to physically plausible parameter dependence. We further derive a closed-form upper bound on the reliable extrapolation range for discrete-time DSR models. With these amendments, extrapolation beyond the training regime and achieving true topological OODG becomes feasible \textit{without explicit knowledge of the true control parameters} \cite{van_tegelen_neural_2025, panahi2024adaptable, kong_machine_2021, kim_inferring_2021}, which we usually do not have for complex real-world systems.

\paragraph{Limitations} We made the assumption that control parameters enter in affine form (or at least in a single functional form) into the physical equations, which might not always be met. Likewise, the sparseness assumption might not be true for all systems. As stated in Theorem \ref{theorem:theorem_2}, there may also be more principled obstacles when moving from continuous to discrete time, which might be harder to resolve and require further research.

\section{Acknowledgments}
\label{sec:acknowledgements}
This work was supported by the German Research Foundation (DFG) through individual grants Du 354/15-1 (\# 502196519) and Du 354/18-1 (\# 567025973), and within Germany’s Excellence Strategy EXC 2181/1 – 390900948 (STRUCTURES). Further support came from the German Ministry for Research, Astronautics, and Technology (BMFTR) through the FEDORA (\# 01EQ2403F) research consortium.

\bibliographystyle{plainnat}
\bibliography{literature}

\newpage
\appendix

\section{Systems with Multiple Parameters and Nonlinear Dependencies}
\label{appx:multidim_parameters}

The analysis in Sect.~\ref{sec:problem-2} and \ref{sec:nonlin_scaling}, which established the structural limitations of affine hierarchical models for the scalar parameter case $p \in \mathbb{R}$, extends directly to systems depending on multiple parameters $\bm{p}=(p_1,\dots,p_n)^T$ of the form
\begin{equation}
    \bm{f}(\bm{x}, \bm{p}) = \bm{f}_0(\bm{x}) + \sum_{i=1}^n p_i \bm{f}_i(\bm{x}),
\end{equation}
where each $p_i$ enters the system multiplicatively, scaling a corresponding vector field component $\bm{f}_i(\bm{x})$. To see this, fix any index $k$ and rewrite the system as
\begin{equation}
    \bm{f}(\bm{x}, \bm{p}) = \tilde{\bm{f}}_k(\bm{x}, \bm{p}_k^{\perp}) + p_k \bm{f}_k(\bm{x}),
\end{equation}
where $\bm{p}_k^{\perp} := (p_1,\dots,p_{k-1},p_{k+1},\dots,p_n)^T$ collects all parameters except $p_k$, and $\tilde{\bm{f}}_k(\bm{x}, \bm{p}_k^{\perp}) := \bm{f}_0(\bm{x}) + \sum_{i \neq k} p_i \bm{f}_i(\bm{x})$ groups all terms independent of $p_k$. The resulting expression is affine in $p_k$, with the remaining terms playing the role of a modified baseline vector field. Since this argument holds for any $k$, all previous derivations apply to each parameter individually, and hence to the full multi-parameter system.

More generally, systems in which parameters appear nonlinearly can be brought into the same affine form by a reparameterization. Consider, for example,
\begin{equation}
    \bm{f}(\bm{x}, \bm{p}) = \bm{f}_0(\bm{x}) + p_1^{\beta}\bm{f}_1(\bm{x}) + p_2 p_3 \bm{f}_2(\bm{x}).
\end{equation}
Defining $q_1 := p_1^{\beta}$ and $q_2 := p_2 p_3$ gives
\begin{equation}
    \bm{f}(\bm{x}, \bm{q}) = \bm{f}_0(\bm{x}) + q_1 \bm{f}_1(\bm{x}) + q_2 \bm{f}_2(\bm{x}),
\end{equation}
which is linear in the transformed parameters $\bm{q}$. The model therefore effectively learns an embedding $\bm{l}(\bm{q})$ rather than $\bm{l}(\bm{p})$. Two important limitations arise in this context, however. First, if the reparameterization mapping $\bm{p} \mapsto \bm{q}$ is non-bijective (e.g., $q_1 = p_1^2$), distinct physical parameter configurations map to identical dynamics, which breaks the unique identifiability of the original control parameters and violates the injectivity assumed in Eq.~\ref{eq:g}. Second, when a single original parameter $p_i$ appears in multiple distinct algebraic forms within the system (e.g., when both $p_1$ and $p_1^2$ enter separately), each such form requires its own feature dimension. This artificially expands the effective parameter space beyond what the true linear parameter dependence requires. As established in Corollary~\ref{cor:cor_1_2}, because standard ERM strictly optimizes for local training loss without structural priors, it leaves these excess dimensions unconstrained, allowing them to retain non-zero weights. Extrapolating outside the training domain can then cause an unbounded growth of the model's spectral radius, potentially introducing spurious bifurcations that are absent from the true system.

\section{Dynamical and positional scaling for complex models}
\label{appx:feature_scalings}

\paragraph{Deeper piecewise-linear networks} In the main text, our structural analysis of affine hierarchical models primarily assumed a minimal single-layer RNN architecture (Eq.~\ref{eq:rnn}) for analytical clarity, and because this form has proven to be efficient for DSR problems \cite{brenner_almost_2024}. Here we show that the fundamental limitation regarding the asymptotic entanglement of positional and dynamical scaling extends to deeper architectures. Consider a single-hidden-layer RNN with piecewise-linear activation function $\varphi$:
\begin{equation}
    \bm{z}_{t+1} = \bm{W}_1(l)\,\varphi\!\left(\bm{W}_2(l)\,\bm{z}_t + \bm{h}_2(l)\right) + \bm{h}_1(l).
    \label{eq:complex_models}
\end{equation} 

Within a single linear subregion of $\varphi$, we have $\varphi(\bm{z}) = \bm{D}\bm{z} + \bm{c}$ for some diagonal matrix $\bm{D}$ and offset $\bm{c}$. In this region, the FP $\bm{z}^*(l)$ can be solved for analytically:
\begin{equation}
    \bm{z}^*(l) = \big( \bm{I} - \bm{W}_1(l)\,\bm{D}\,\bm{W}_2(l) \big)^{-1} \big( \bm{W}_1(l)(\bm{D}\,\bm{h}_2(l) + \bm{c}) + \bm{h}_1(l) \big).
\end{equation}
Because the network matrices each scale affinely with the latent feature $l$, their inverses introduce rational dependence on $l$ in the FP scaling. Thus, $\bm{z}^*(l)$ is a multi-dimensional rational function of $l$, and in general not simply polynomial.

The local Jacobian within the same linear subregion is
\begin{equation}
    \bm{J}(l) = \bm{W}_1(l) \bm{D} \bm{W}_2(l),
\end{equation}
which scales \emph{quadratically} in $l$ due to the product of the two affinely parameterized weight matrices. Positional scaling (how the fixed point moves with $l$) and dynamical scaling (how the Jacobian grows with $l$) therefore belong to different function classes, just as in the minimal architecture analyzed in the main text (Eq.~\ref{eq:rnn}). The structural mismatch and asymptotic entanglement established in Theorem~\ref{theorem:theorem_1} thus persists in single-hidden-layer networks. For even deeper networks, multiplying additional affine weight matrices yields higher-order polynomials in $l$, further compounding this structural mismatch.

The same conclusion holds for continuous-time Neural ODEs with a piecewise-linear hidden layer. For these, the equilibrium condition $\dot{\bm{z}}=\bm{0}$ yields
\begin{equation}
    \bm{z}^*(l) = -\big( \bm{W}_1(l) \bm{D} \bm{W}_2(l) \big)^{-1} \big( \bm{W}_1(l)(\bm{D} \bm{h}_2(l) + \bm{c}) + \bm{h}_1(l) \big)
\end{equation}
which, by the same reasoning as above, is again a multi-dimensional rational function of $l$. The fundamental entanglement of positional and dynamical scaling is therefore an architectural bottleneck shared by both discrete- and continuous-time hierarchical DSR frameworks with hidden layers.

\paragraph{Hidden-layer continuous-time networks with bounded smooth activations}
The piecewise-linear case above has the advantage that FPs can be computed in closed form. For smooth nonlinear activation functions such as $\tanh$ or \texttt{sigmoid}, this is not necessarily the case anymore. Nevertheless, the same structural conclusions about the entanglement of positional and dynamical scaling carry over, as we show now.

Consider a single-hidden-layer Neural ODE with vector field $\bm{f}_{\bm{\theta}}(\bm{z}, l) = \bm{W}_1(l)\bm{\psi}(\bm{W}_2(l)\bm{z} + \bm{h}_2(l)) + \bm{h}_1(l)$, where $\bm{\psi}\in C^1$ is a globally smooth, bounded activation function 
with $\|\bm{\psi}\|_\infty<\infty$ and $\|\bm{\psi}'\|_\infty<\infty$. Even if we can no longer solve $\bm{f}_{\bm{\theta}}(\bm{z}, l) = \bm{0}$ analytically, the implicit function theorem yields a well-defined expression for how the FP changes with the feature $l$, provided the model Jacobian
\begin{equation}
    \bm{J}_{\text{model}}(l) = \bm{W}_1(l)\,\bm{D}(\bm{z}^*(l), l)\,\bm{W}_2(l), 
    \quad 
    \bm{D}(\bm{z}^*(l),l) := \mathrm{diag}\bigl(\bm{\psi}'(\bm{W}_2(l)\bm{z}^*(l) + \bm{h}_2(l))\bigr),
\end{equation}
is invertible on the extrapolation region (an analogue of the ``no eigenvalue at zero'' assumption used in Theorem~\ref{theorem:theorem_1}). Differentiating
$\bm{f}_{\bm{\theta}}(\bm{z}^*(l), l) = \bm{0}$ with respect to $l$ and rearranging gives
\begin{equation}
    \frac{d\bm{z}^*}{dl}
    = -\bm{J}_{\text{model}}(l)^{-1}
      \frac{\partial \bm{f}_{\bm{\theta}}}{\partial l}.
    \label{eq:appxD_smooth_dzstardl}
\end{equation}

To bound the growth of $\bm{z}^*(l)$ as commanded by this equation, we utilize two observations: First, because $\bm{\psi}$ and $\bm{\psi}'$ are uniformly bounded, $\bm{D}(\bm{z}^*(l),l)$ is uniformly bounded in operator norm, and $\partial \bm{f}_{\bm{\theta}}/\partial l$ is affine in the feature weights and therefore at most polynomially bounded in $\|\bm{z}^*\|$ (constant for bounded $\bm{\psi}$, degree-$1$ for the polynomially bounded case). Second, applying the same singular-value lower bound logic used in the proof of Corollary~\ref{cor:cor_1_1} – specifically, $\sigma_{\min}\bigl(\bm{W}_1(l)\bm{D}(\bm{z}^*(l),l)\bm{W}_2(l)\bigr) \ge c\,|l|^2$ – Eq.~\ref{eq:appxD_smooth_dzstardl} yields the following differential inequality
\begin{equation}
    \bigl\|d\bm{z}^*/dl\bigr\|_2 \le \tfrac{C_0 + C_1\|\bm{z}^*\|^q}{c\,|l|^2}
\end{equation}
for constants $C_0, C_1 \ge 0$ and degree $q \ge 0$ determined by the activation function. This is the same fundamental differential inequality underlying the proofs of Theorem~\ref{theorem:theorem_1} and Corollary~\ref{cor:cor_1_1}, but with an even faster decay rate due to the depth of the network. Integrating it with respect to $l$ reveals a strict geometric limit: because the integral of $1/|l|^2$ converges to a finite constant as $l \to \infty$, the fixed point cannot grow unboundedly. Instead, it is strictly bounded by a constant, $\|\bm{z}^*(l)\|_2 = \mathcal{O}(1)$, for both globally bounded and polynomially bounded activation functions. 

While we no longer have an explicit rational form for $\bm{z}^*(l)$ as in the simple no-hidden-layer case, the fundamental entanglement persists: the model's fixed point is structurally forced into an $\mathcal{O}(1)$ asymptotic bound, whereas the true physical system requires independent algebraic scaling $\Theta(p^m)$. This confirms that the architectural bottleneck preventing independent scaling—and thus the conclusion of Theorem~\ref{theorem:theorem_1}—extends to hidden-layer Neural ODEs with smooth activations as well.

\section{Dependence of RNN flow map on features}
\label{appx:flow_map_dependence}

In Eq.~\ref{eq:fp_RNN} we derived how FPs of the discrete-time RNN flow map change as a latent feature $l$ is varied. This rate of change $d\bm{z}^*/dl$ describes how the location $\bm{z}^*$ of a FP moves in state space as the feature $l$ is altered, and depends on the product of the inverse model Jacobian and the partial derivative of the model flow map, $\partial \bm{F}_{\bm{\theta}}/\partial l$.

For the minimal architecture utilized in the main text, defined as $\bm{F}_{\bm{\theta}}(\bm{z}) = \bm{W}(l) \bm{\psi}(\bm{z}) + \bm{h}(l)$ (Eq.~\ref{eq:rnn}) with affinely parameterized weights $\bm{W}(l) = \bm{W}_c + l \bm{W}_v$ and biases $\bm{h}(l) = \bm{h}_c + l \bm{h}_v$, this partial derivative evaluated at the FP $\bm{z}^*$ simplifies to:
\begin{equation}
    \frac{\partial \bm{F}_{\bm{\theta}}}{\partial l} = \bm{W}_v \bm{\psi}(\bm{z}^*) + \bm{h}_v
\end{equation}

More generally, for the extended, single-hidden-layer architecture introduced in Appx.~\ref{appx:feature_scalings}, which utilizes a piecewise-linear activation function $\varphi$, the partial derivative w.r.t. $l$ expands into a more complex form. Applying the chain rule to the extended flow map (Eq.~\ref{eq:complex_models}) yields:
\begin{equation}
\begin{aligned}
    \frac{\partial \bm{F}_{\bm{\theta}}}{\partial l}
    &= \bm{W}_{1,v}\, \varphi(\bm{W}_2(l) \bm{z}^* + \bm{h}_2(l)) \\
    &\quad + \bm{W}_1(l) \text{diag}(\varphi'(\bm{W}_2(l) \bm{z}^* + \bm{h}_2(l)))
    \left( \bm{W}_{2,v} \bm{z}^* + \bm{h}_{2,v} \right) \\
    &\quad + \bm{h}_{1,v}
\end{aligned}
\end{equation}

Crucially, in both the simple and the more complex single-hidden-layer architecture, the variation inducing weights (e.g., $\bm{W}_v, \bm{h}_v$ or $\bm{W}_{1,v}, \bm{W}_{2,v}$, etc.) scale strictly linearly with $l$, and the derivative of the piecewise-linear activation, $\varphi'$, evaluates component-wise to either $0$ or $1$. As a result, $\partial \bm{F}_{\bm{\theta}}/\partial l$ is either constant or affine in $l$ within each linear subregion. This limits the polynomial degree of the positional shift ($d\bm{z}^*/dl \propto \bm{J}(l)^{-1} \partial \bm{F}_{\bm{\theta}}/\partial l$), ensuring it exceeds the degree of the inverse Jacobian by at most one. Consequently, unconstrained affine feature mappings do not provide enough flexibility to capture the \emph{independent} positional and dynamical scaling required for topological OODG.

\section{Architectures of the used models}
\label{appx:used_models}

\subsection{Discrete-time model}
\label{appx:disc_model}
For our experiments we employed the shallow piecewise-linear RNN (shPLRNN; \cite{hess_generalized_2023}), which simply adds a linear term to Eq.~\ref{eq:complex_models} (Appx.~\ref{appx:feature_scalings}), thereby improving its practical performance by stabilizing loss gradients \cite{schmidt_identifying_2021}. It follows the state transition equation
\begin{equation}
    \bm{z}_{t+1} = \bm{A}(l)\bm{z}_t 
    + \bm{W}_1(l)\,\varphi\!\left(\bm{W}_2(l)\bm{z}_t + \bm{h}_2(l)\right)
    + \bm{h}_1(l),
\end{equation}
where $\bm{z}_t \in \mathbb{R}^M$ denotes the latent state at time $t$, and $\varphi$ is the $\mathrm{ReLU}$ activation function. The matrix $\bm{A}(l) \in \mathbb{R}^{M \times M}$ is diagonal, i.e.\ $\bm{A}(l) = \mathrm{diag}(\bm{a}(l))$. The weight matrices are $\bm{W}_1(l) \in \mathbb{R}^{M \times H}$ and $\bm{W}_2(l) \in \mathbb{R}^{H \times M}$, with corresponding biases $\bm{h}_1(l) \in \mathbb{R}^M$ and $\bm{h}_2(l) \in \mathbb{R}^H$. Here, $M$ denotes the latent dimensionality and $H$ the hidden dimensionality.

The latent states are related to the observations via
\begin{equation}
    \bm{x}_t = g\!\left(\bm{z}_t\right),
    \label{eq:observation_model_of_used_models}
\end{equation}
where a linear mapping $g(\bm{z}_t) = \bm{E}\bm{z}_t$ was used here.

Without feature splitting, all parameters are constructed according to Eq.~\ref{eq:affine_map} in Section~\ref{sec:hierarchical_dsr}, using a single feature $l$:
\begin{equation}
    \bm{A}(l) = \bm{A}_c + l\,\bm{A}_v, 
    \qquad 
    \bm{W}_1(l) = \bm{W}_{1,c} + l\,\bm{W}_{1,v}, 
    \qquad \dots
\end{equation}

Likewise, with feature splitting, one feature $l_{\text{dyn}}$ is used for all parameters that enter the Jacobian and thus contribute to the local growth rates, while another feature $l_{\text{pos}}$ is used for parameters only translating the flow without changing its local growth rates (Eq.~\ref{eq:criteria_dyn_pos}). Thus, the positional feature only affects the first bias term $\bm{h}_1$, while all other parameters depend on the dynamical feature:
\begin{equation}
    \bm{h}_2\!\left(l_{\text{pos}}\right) = \bm{h}_{2,c} + l_{\text{pos}}\,\bm{h}_{2,v}, 
    \qquad 
    \bm{A}\!\left(l_{\text{dyn}}\right) = \bm{A}_c + l_{\text{dyn}}\,\bm{A}_v, 
    \qquad \dots
\end{equation}

\subsection{Continuous-time}
\label{appx:used_models_cont}
The continuous-time model used in our empirical evaluations directly parameterizes the vector field as
\begin{equation}
    \dot{\bm{z}}_t = \bm{W}_1(l)\bm{\psi}(\bm{W}_2(l) \bm{z}_t + \bm{h}_2(l)) + \bm{h}_1(l)
\end{equation}
where $\bm{z}_t \in \mathbb{R}^M$ denotes the latent state at time $t$, and $\bm{\psi}$ is the $\tanh$ activation function. The weight matrices are $\bm{W}_1(l) \in \mathbb{R}^{M \times H}$ and $\bm{W}_2(l) \in \mathbb{R}^{H \times M}$, with the corresponding biases $\bm{h}_1(l) \in \mathbb{R}^M$ and $\bm{h}_2(l) \in \mathbb{R}^H$. As in the discrete-time case above, $M$ and $H$ denote latent and hidden dimensionality, respectively. As above, latent states are related to the observations by Eq.~\ref{eq:observation_model_of_used_models}.

Scaling with features $l$ was implemented the same way as for the discrete-time models (\ref{appx:disc_model}), with the distinction between dynamical and positional parameters given by Eq.~\ref{eq:criteria_dyn_pos}.

\subsection{Hyperparameters \& Compute Resources}

For the experiments on the Selkov system, we used $M=2$ and $H=128$ for both the discrete-time and continuous-time setups, and trained each model on ground-truth sequences of up to $20$ steps with generalized teacher forcing \citep{hess_generalized_2023}. Training times were approximately $10\,$min per shPLRNN model (on a CPU of type AMD EPYC 7713 with 64-cores; with multiple models trained in parallel) and about $10\,$min per Neural ODE model (on a GPU of type NVIDIA GeForce RTX 2080 Ti where roughly 2 GB were utilized per model; again, multiple models were trained in parallel).

For the reported results on the Lorenz-63 system with the continuous-time setup, we used $M=12$ and $H=64$, ground-truth sequences of up to $48$ steps and generalized teacher forcing \citep{hess_generalized_2023}. On the same GPU setup as for the Selkov system, approximately $30\,$min were required for training and up to 4 GB of GPU memory per model.

All continuous-time models were optimized using the Adam optimizer~\citep{kingma_adam_2015}, and all discrete-time models with RAdam~\citep{liu_variance_2020}, both as implemented in \texttt{pytorch}~\citep{paszke2019pytorch}.

\section{Dynamical systems used for benchmarking}
For each benchmark data set, we used $J_{\text{ID}}$ different control parameter values as training domain. For each system within this domain, we generated $K$ trajectories of length $T_\text{train}$ for training, and an additional $K$ trajectories of length $T_\text{test}$ for in-domain evaluation. To evaluate out-of-domain (OOD) generalization, we also generated $K$ trajectories of length $T_\text{test}$ each for $J_{\text{OOD}}$ additional systems located outside the training domain. All data were standardized per dimension. To prevent data leakage, the mean and variance were computed exclusively across the training trajectories (spanning all $J_{\text{ID}}$ training systems and time points) and subsequently applied to scale the training, in-domain test, and OOD test sets.

\subsection{Lorenz-63 model}
\label{appx:lorenz}
The Lorenz-63 model was originally introduced by \citet{lorenz_deterministic_1963} as a simplified model of atmospheric convection. It was one of the earliest examples illustrating that a low-dimensional deterministic ODE with simple nonlinearities (low-order multinomials) can generate chaotic behavior:
\begin{align}
    \dot x &= \sigma (y - x) \\
    \dot y &= x (\rho - z) - y \\
    \dot z &= xy - \beta z
\end{align}
We took $\sigma = 10$ and $\beta = 8/3$, and varied the control parameter $\rho$ across $16$ uniformly spaced values in the interval $[5, 22.5]$, where the system converges to one of two FPs, depending on initial condition.

For each parameter setting, we integrated $10$ training trajectories up to time $T=50$ in time steps of $\Delta t = 0.01$ from random uniformly distributed initial conditions $x_0 \sim \mathcal{U}(-20, 20)$, $y_0 \sim \mathcal{U}(-25, 25)$ and $z_0 \sim \mathcal{U}(0, 35)$. As noted above, all training trajectories were standardized dimension-wise.

To test trained models, we created two additional sets of trajectories of length $T=200$. The in-domain test set reuses the same values for $\rho$ as in training, while for the OOD test set we added $30$ systems with values of $\rho \in [22.5, 35]$, allowing us to check for generalization to unseen types of dynamics such as the transition to chaos at $\rho\approx24.06$.

\subsection{Selkov model}
\label{appx:selkov}
The two-dimensional Selkov model \citep{selkov1968self}, which transitions from a stable spiral point to a stable limit cycle through a super-critical Hopf bifurcation, is given by
\begin{align}
    \dot x &= - x + a y + x^2 y \\
    \dot y &= b - a y - x^2 y \ .
\end{align}
For training, we kept $a=0.1$ constant and varied $b$ across $15$ uniformly spaced values in the interval $[0.8, 1.1]$ ($\Delta b=0.02$), all of which correspond to FP dynamics. Per setting, $10$ training trajectories were integrated for $T=5$ at step size $\Delta t=0.1$, starting from uniformly distributed initial conditions $(x, y) \sim \mathcal{U}(-1, 3)$. The initial condition and integration period were deliberately chosen to yield short \textit{transient} trajectories, covering mainly the period of convergence to the fixed points (as the type of fixed point is otherwise impossible to learn from just the stationary portion in the absence of noise). All training trajectories were standardized per dimension.

For both test sets (i.e., in-domain and OOD) the integration time was extended to $T=500$. The OOD test set was created using $15$ values of $b \in [0.3, 0.8]$, covering both fixed point and limit cycle dynamics.

\section{Feature extrapolation strategy}
\label{appx:feature_extrap}

After training a hierarchical model on $J_\text{ID}$ systems with control
parameters $\{p^{(j)}\}_{j=1}^{J_\text{ID}}$ and inferred latent features
$\{l^{(j)}\}_{j=1}^{J_\text{ID}} \subset \mathbb{R}^{L}$, predicting the dynamics for unseen control parameters $\{p^{(j')}\}_{j'=1}^{J_\text{OOD}}$ requires an estimate of the corresponding latent features $\{l^{(j')}\}_{j'=1}^{J_\text{OOD}}$. As discussed in Section~\ref{sec:problem-2}, feature splitting allows $l_\text{dyn}$ and $l_\text{pos}$ to assume independent algebraic relationships with $p$ (e.g.\ $l_\text{dyn}\propto
p$ while $l_\text{pos}\propto\sqrt{p-p_\text{crit}}$ near a Hopf
bifurcation). Rather than committing to a fixed parametric form
\textit{a priori}, we fit each feature (respectively feature dimension if $L>1$) independently to $p$ and select the most appropriate functional class via cross-validation on the training domain.

\paragraph{Candidate models} For every feature dimension, we consider three low-parameter functional forms that cover the scaling
behaviors expected from the analysis in Section~\ref{sec:problem-2}:
\begin{align}
\text{(Power law)}\quad   l_{\text{PL}}(p) &= a\,(p-b)^{d} + c, \quad b<p_\text{min}-\eta\,(p_\text{max}-p_\text{min}), \\
\text{(Signed power law)}\quad  l_{\text{SPL}}(p) &= a\,\mathrm{sign}(p-b)\,|p-b|^{d} + c, \\
\text{(Parabola)}\quad l_{\text{Parab}}(p) &= a\,(p-b)^{2} + c,
\end{align}
with exponent $d\!\in\![0.5, 1.5]$. The power law models algebraic scaling with a fractional exponent, which is the natural functional form for the square-root-type behavior characteristic of multiple normal forms near local bifurcations. For any fit with $l_{\text{PL}}(p)$, we require that $b < p_{\min} - \eta(p_{\max} - p_{\min})$, where $p_{\min}$ and $p_{\max}$ are the smallest and largest control parameter values (across both in-domain and OOD sets), respectively, and $\eta=1$ regulates the margin. This ensures that all points to which we seek to extrapolate lie safely within the real-valued domain of the function (avoiding complex roots where $p - b < 0$), and that the highly nonlinear region near $p=b$ is avoided. Additionally, we allow fits with a signed power law, which in contrast has full support, as well as a sign change similar to an inflection point at $p=b$. The parabolic fit is incorporated to cover non-monotonic trends which were observed to occur for higher-dimensional feature vectors.

\paragraph{Robust regression} Each candidate function is fit by nonlinear least squares with a soft-$\ell_1$ loss (\texttt{scipy.optimize.least\_squares}), which downweighs points with large residuals and thereby prevents a few aberrant features from dominating the fit. After an initial fit, points with standardized residual $|r_j|/\mathrm{RMSE}>k$ ($k=2.5$) are flagged as outliers and the model is refit on the remaining points. We cap the fraction of points dropped at $5\%$.

\begin{figure}[h]
    \centering
    \includegraphics[width=\linewidth]{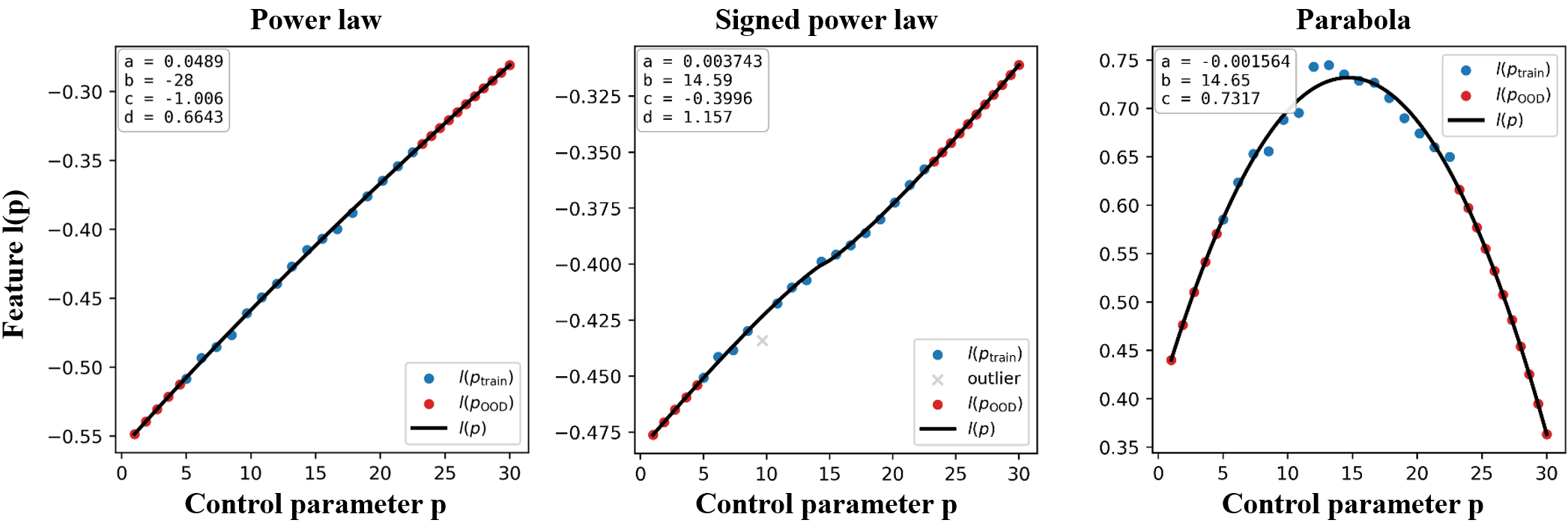}
    \caption{Empirical examples of extrapolation fits obtained from the three candidate models: power law (left), signed power law (center), and parabola (right). Blue markers correspond to in-domain training data, red markers to OOD samples, gray markers indicate discarded outliers, and the black curve represents the fitted function.}
    \label{fig:extrapolation_fits}
\end{figure}

\paragraph{Model selection by cross-validation} The three candidate function classes are compared by cross-validation: across $S=5$ splits, we hold out a fraction $\tau=0.2$ of all systems, fit each candidate function (with the same outlier handling) on the remaining systems, and evaluate the mean absolute error on the held-out systems,
\begin{equation}
\mathrm{MAE}_\text{CV}^{(m)} = \frac{1}{S}\sum_{s=1}^{S}\frac{1}{|J_s|}\sum_{j\in J_s}\bigl|l^{(m,s)}(p^{(j)})-l^{(j)}\bigr|,
\end{equation}
for $m\in\{\text{PL},\text{SPL},\text{Parab}\}$, where $J_s$ is the set of indices of the held-out systems in split $s$, and $|J_s|$ denotes the size of this set. We use MAE rather than MSE to prevent single, badly captured systems from dominating the comparison. The winning candidate function class $m^{*}=\arg\min_{m}\mathrm{MAE}_\text{CV}^{(m)}$ is then re-fit on the full set of $J_{\text{ID}}$ systems (excluding outliers) to produce the final extrapolated features at the requested OOD parameters $\{p^{(j')}\}_{j'=1}^{J_\text{OOD}}$.

\section{Evaluation metric}
\label{appx:evaluation}
To quantify reconstruction quality, we compare the bifurcation diagrams of the true and reconstructed systems by treating them as empirical distributions and measuring the discrepancy between them.
In practice, we obtain bifurcation diagrams by evaluating predicted and true test trajectories for each system (in-domain and OOD), cutting off any transients and extracting local minima and maxima in the $x_1$ dimension. This yields two empirical samples
\begin{equation}
A = \{a_i\}_{i=1}^{n}, \qquad B = \{b_j\}_{j=1}^{m},
\end{equation}
from the true and reconstructed systems at each control parameter configuration, which we treat as one-dimensional empirical distributions on $\mathbb{R}$. For trajectories converging to a FP, the last $500$ time points from the trajectory are used as the sample from the distribution. Crucially, the Wasserstein-1 distance is well-defined even when comparing singular distributions (e.g., isolated points) to continuous distributions (e.g., chaotic attractors), as it relies on optimal transport cost rather than overlapping support, making it suited for evaluating topology changes across bifurcations.

We quantify the discrepancy between these distributions using the Wasserstein-1 distance which evaluates the optimal transport cost for moving one distribution into another \cite{vallender1974calculation, villani2009optimal}. For probability measures $\mu, \nu$ on $\mathbb{R}$ with finite first moments, it is defined as
\begin{equation}
    W_1(\mu,\nu) = \int_0^1 \bigl|Q_\mu(q) - Q_\nu(q)\bigr| \, dq,
\end{equation}
where $Q_\mu, Q_\nu$ are the quantile functions of $\mu$ and $\nu$ \cite{villani2009optimal}.

This formulation motivates a simple empirical estimator. Let $A$ and $B$ be sorted in ascending order and let $\hat{Q}_A, \hat{Q}_B$ denote their empirical quantile functions, obtained by linear interpolation between the sample points. Evaluating both on a common uniform grid $\{q_k = (k-1)/(K-1)\}_{k=1}^{K}$ with $K=\max(n,m)$, we estimate
\begin{equation}
    \widehat{W}_1(A,B) = \frac{1}{K}\sum_{k=1}^{K}
                    \bigl|\hat{Q}_A(q_k) - \hat{Q}_B(q_k)\bigr|.
\end{equation}
Choosing $K=\max(n,m)$ ensures that the larger of the two samples is used at full resolution while the smaller is up-sampled by linear interpolation of its quantile function; non-finite values in either sample are removed before sorting.

\section{Additional results}
\label{appx:additional_results}
Here we examine how feature splitting and low-rank-\&-sparsity-regularization interact when combined. For this, two benchmark systems with different extrapolation regimes were examined. We find that for the Selkov system (Fig.~\ref{fig:wasserstein_selkov_node}) feature splitting is the dominant factor: it improves both in-domain stability near the training boundaries and OOD performance across the full extrapolation range. Adding regularization yields no consistent additional gain. On the Lorenz-63 system extrapolated into the chaotic regime (Fig.~\ref{fig:l1_lowrank_splitting}), the picture reverses: feature splitting alone is insufficient to prevent the full-rank model from diverging, and the combination of regularization and splitting produces the lowest OOD error. Taken together, these results suggest that the two interventions address complementary failure modes (see Section~\ref{sec:problems}).
\begin{figure}[ht!]
    \centering
    \includegraphics[width=\linewidth]{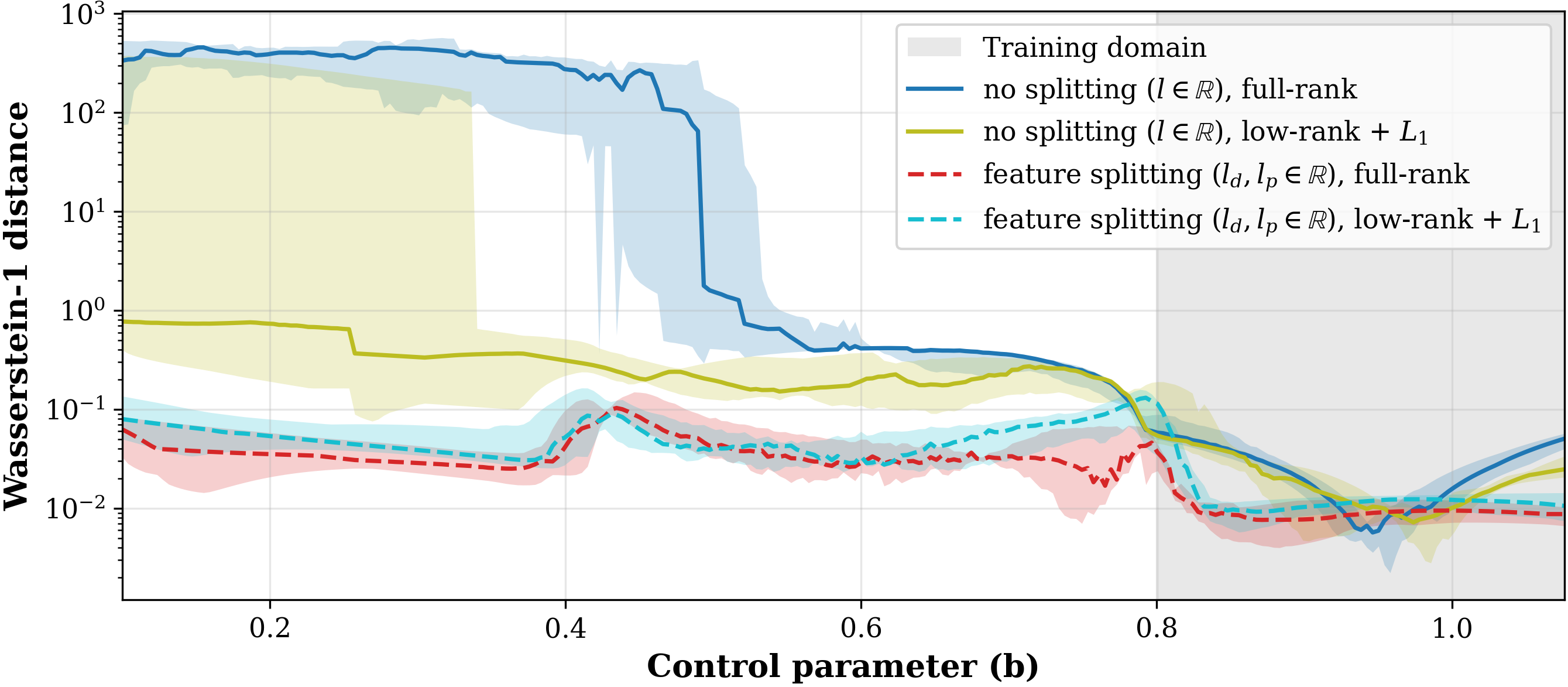}
    \caption{
    Selkov system: combining feature splitting and low-rank-\&-sparsity-regularization. Wasserstein-1 distances between true and reconstructed bifurcation diagrams for hierarchical Neural ODE models with no, one, or both amendments to the hierarchical model. Training data lies in the fixed-point regime; OOD extrapolation covers the limit-cycle regime and the second Hopf bifurcation back to FPs. Solid and dashed lines show medians over 10 runs; envelopes indicate the interquartile range. The gray shaded region indicates the training domain.
    }
    \label{fig:wasserstein_selkov_node}
\end{figure}

\begin{figure}[ht!]
    \centering
    \includegraphics[width=\linewidth]{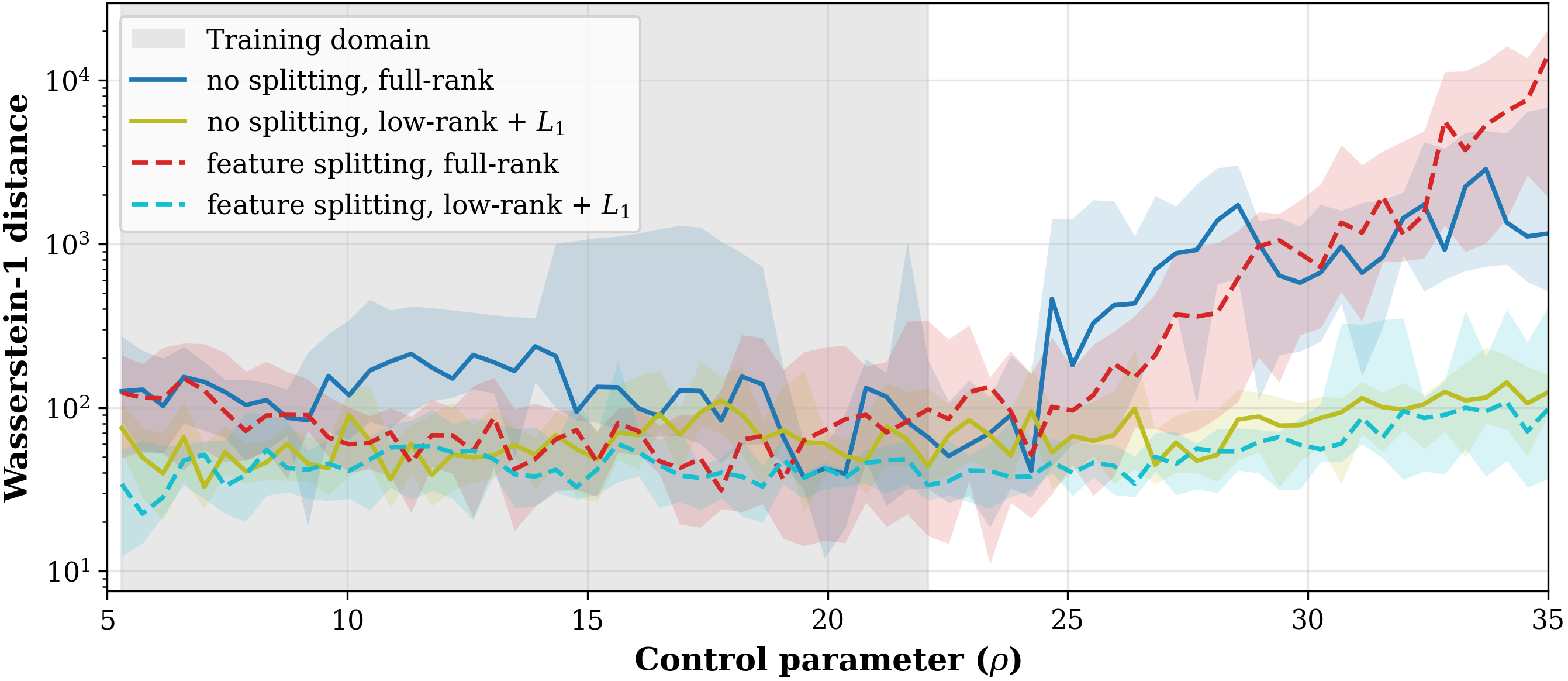}
    \caption{Lorenz-63 system: combining feature splitting and low-rank-\&-sparsity-regularization. Same setup as Fig.~\ref{fig:wasserstein_selkov_node}, with training data in the two-fixed-point regime and OOD extrapolation across the transition to chaos. Solid and dashed lines show medians over 10 runs; envelopes indicate the interquartile range. The gray shaded region indicates the training domain.
    }
    \label{fig:l1_lowrank_splitting}
\end{figure}

Additionally, a comparison of shPLRNN models with and without low-rank-\&-sparsity regularization on the Lorenz-63 DS is presented in Fig.~\ref{fig:bif_medians_lorenz_shplrnn}, which provides the bifurcation diagram and a sampled trajectory of $10,000$ steps, with median Wasserstein-1 distance over $n=10$ trained models per setting. The regularized model recovers the chaotic attractor at $\rho=29$, well outside the training domain, whereas the unconstrained model breaks down and converges to a FP, highlighting the need for the low-rank-\&-sparsity regularization.

\begin{figure}[ht!]
    \centering
    \includegraphics[width=\linewidth]{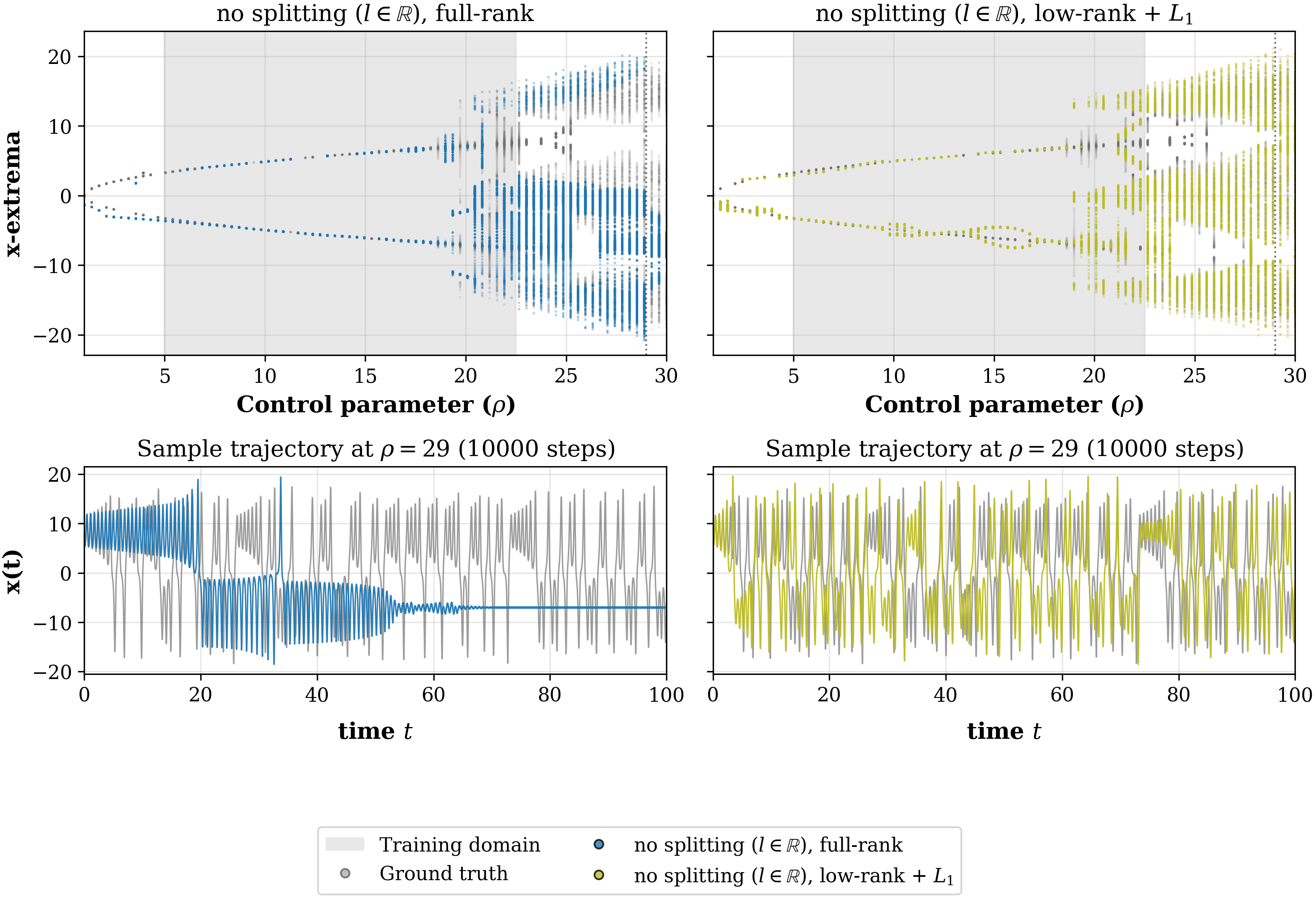}
    \caption{shPLRNN models trained on Lorenz-63, without (left) and with (right) low-rank-\&-sparsity regularization. \textbf{Top:} reconstructed bifurcation diagrams. \textbf{Bottom:} sample trajectories at $\rho=29$. The regularized model captures the chaotic regime, while the unconstrained model fails to reconstruct it.}
    \label{fig:bif_medians_lorenz_shplrnn}
\end{figure}

\section{Proofs}
\subsection{Proof of Proposition~\ref{prop:prop_1}}
\label{appx:proof_prop1}

\textit{Proof.} Let $\mathcal{S} \subset \mathbb{R}^{N \times N}$ denote the $k$-dimensional subspace of matrices with the true physical sparsity pattern. By assumption, the true parameter-dependent part of the Jacobian satisfies $\bm{J}_1 \in \mathcal{S}$. The model uses an affine Jacobian parameterization of the form $$ \bm{J}_{\mathrm{model}}(l)=\hat{\bm{J}}_0 + l\,\hat{\bm{J}}_1, $$ where $l$ is the learned latent feature.

To demonstrate that this extrapolation failure is a structural artifact of the affine parameterization rather than a consequence of poor training, we consider an ideal in-domain scenario. Suppose the model perfectly captures the true parameter dependence during training, such that the learned feature scales strictly linearly with the control parameter, $l = \alpha p$ for some scalar $\alpha \neq 0$.

We now decompose the learned variational matrix $\hat{\bm{J}}_1$ into its components parallel and orthogonal to $\mathcal{S}$ with respect to the Frobenius inner product:
$$ \hat{\bm{J}}_1 = \hat{\bm{J}}_1^{\parallel} + \hat{\bm{J}}_1^{\perp},
\qquad \hat{\bm{J}}_1^{\parallel} \in \mathcal{S},
\quad \hat{\bm{J}}_1^{\perp} \in \mathcal{S}^{\perp}. $$
Here, $\hat{\bm{J}}_1^{\parallel}$ is the component aligned with the true physical parameter dependence, whereas $\hat{\bm{J}}_1^{\perp}$ lies in directions along which there is no variation in the true system.

Because the feature-coupling weights $\bm{\theta}_v$ in Eq.~\ref{eq:affine_map} are full-rank and no structural prior or sparsity-promoting regularization is imposed, empirical risk minimization (ERM) only constrains the projection of $\hat{\bm{J}}_1$ onto the true subspace $\mathcal{S}$. The orthogonal component $\hat{\bm{J}}_1^{\perp}$, which lives in the $(N^2-k)$-dimensional complement $\mathcal{S}^{\perp}$, therefore remains under-determined. Hence, generically, the learned solution may not satisfy $\hat{\bm{J}}_1^{\perp}=0$; we write
$$ \|\hat{\bm{J}}_1^{\perp}\|_F = \epsilon > 0. $$

Consider now the Jacobian extrapolation error outside the training domain, 
$$ \bm{E}(p) := \bm{J}_{\mathrm{model}}(\alpha p) - \bm{J}_{\mathrm{true}}(p). $$
Even under the favorable assumption of perfect in-domain recovery of the physically meaningful part, namely
$$ \hat{\bm{J}}_0 = \bm{J}_0,
\qquad \alpha \hat{\bm{J}}_1^{\parallel} = \bm{J}_1,$$
the remaining error is exactly the orthogonal component:
$$ \bm{E}(p)
= \bigl(\hat{\bm{J}}_0 + \alpha p (\hat{\bm{J}}_1^{\parallel} + \hat{\bm{J}}_1^{\perp})\bigr)
- \bigl(\bm{J}_0 + p\bm{J}_1\bigr)
= \alpha p\,\hat{\bm{J}}_1^{\perp}. $$
Thus, the extrapolation error grows linearly in $|p|$ whenever $\hat{\bm{J}}_1^{\perp} \neq 0$, and this growth occurs entirely in directions orthogonal to the true sparse parameter subspace $\mathcal{S}$.

To understand how this growing matrix error corrupts the reconstructed dynamics, we must evaluate the system's eigenvalue spectrum, which dictates local stability and the occurrence of local bifurcations. We require a mechanism to translate the spatial matrix error into spectral drift. The Bauer--Fike theorem \cite{bauer1960norms} provides this mathematical bridge, bounding how far the eigenvalues of a perturbed matrix can deviate from their true values based on the magnitude of the perturbation. Assuming that the true Jacobian $\bm{J}_{\mathrm{true}}(p)$ is diagonalizable, i.e.,
$$ \bm{J}_{\mathrm{true}}(p) = \bm{V}\bm{\Lambda}\bm{V}^{-1}, $$
the theorem dictates that for any eigenvalue $\tilde{\lambda}$ of the extrapolated model Jacobian $\bm{J}_{\mathrm{model}}(\alpha p)$, there exists an eigenvalue $\lambda$ of $\bm{J}_{\mathrm{true}}(p)$ such that
$$ |\tilde{\lambda} - \lambda| \leq
\kappa(\bm{V}) \|\bm{E}(p)\|_2 \leq
\kappa(\bm{V})\, |\alpha p|\, \|\hat{\bm{J}}_1^{\perp}\|_2, $$
where $\kappa(\bm{V})=\|\bm{V}\|_2 \|\bm{V}^{-1}\|_2$ is the condition number of the true eigenvector matrix. Using $\|\hat{\bm{J}}_1^{\perp}\|_2 \leq \|\hat{\bm{J}}_1^{\perp}\|_F = \epsilon$, we obtain $$ |\tilde{\lambda} - \lambda| \leq \kappa(\bm{V})\, |\alpha p|\, \epsilon. $$

Therefore, as soon as $\epsilon > 0$, the spectral deviation is allowed to grow as $\mathcal{O}(|p|)$ outside of the training domain. Because this unconstrained spectral drift acts in directions completely decoupled from the true physical mechanism, it can push eigenvalues across the imaginary axis or unit circle in a non-physical way, thereby inducing spurious local bifurcations. $\square$

\subsection{Proof of Theorem~\ref{theorem:theorem_1}}
\label{appx:proof_thm1}

\textit{Proof.} The fundamental bottleneck of affine feature mappings is that the model's fixed point (FP), $\bm{z}^*(l)$, and its local Jacobian, $\bm{J}_{\text{model}}(l)$, are structurally coupled through the same network parameters. The FP is determined by the equilibrium condition $\bm{W}(l)\bm{\psi}(\bm{z}^*) + \bm{h}(l) = \bm{z}^*$ for the discrete-time RNN (Eq.~\ref{eq:fp_RNN}) or $\bm{W}(l)\bm{\psi}(\bm{z}^*) + \bm{h}(l) = \bm{0}$ for the continuous-time Neural ODE (Eq.~\ref{eq:fp_NODE}). The local Jacobian at this FP is $\bm{J}_{\text{model}}(l) = \bm{W}(l)\text{diag}(\bm{\psi}'(\bm{z}^*))$ (Eq.~\ref{eq:J_RNN_NODE}). 

Because both FP location and Jacobian 
share this exact affine dependence on the latent feature (via $\bm{W}(l) = \bm{W}_c + l \bm{W}_v$), requiring the model to correctly extrapolate the true system's dynamics turns this shared parameterization into a strict algebraic constraint. Assume there exists a continuous feature mapping $l(p)$ that captures the true system's asymptotic dynamical scaling, meaning its growth scales exactly as $\|\bm{J}_{\text{model}}(l(p))\| = \Theta(p^n)$. To achieve this exact polynomial growth in the Jacobian, the affine structure of the weights rigidly forces the latent feature to scale as $|l(p)| = \Theta(p^n)$.

Consider first the case where the activation function $\bm{\psi}$ is globally bounded (e.g., $\tanh$). As the control parameter diverges ($p \to \infty$), the output of the activation function evaluated at the FP, $\bm{\psi}(\bm{z}^*)$, remains strictly bounded. The positional scaling of the FP is governed by its implicit derivative with respect to $l$ (Eqs.~\ref{eq:fp_RNN} and \ref{eq:fp_NODE}), which is proportional to the inverse Jacobian, e.g., $d\bm{z}^*/dl = -\bm{J}(l)^{-1} (\partial \bm{f}_\theta/\partial l)$ for the continuous-time model. The partial derivative $\partial \bm{f}_\theta/\partial l = \bm{W}_v \bm{\psi}(\bm{z}^*) + \bm{h}_v$ is strictly bounded because $\bm{\psi}$ is bounded. At the same time, because the model Jacobian scales through an affine mapping with $l$, its inverse scales as $\|\bm{J}(l)^{-1}\| = \mathcal{O}(|l|^{-1})$. Consequently, the rate of change of the fixed point location is bounded by $\left\| d\bm{z}^*/dl \right\| \leq \mathcal{O}(|l|^{-1})$. Integrating this differential inequality dictates that the fixed point can grow at most logarithmically: $\|\bm{z}^*(l)\| = \mathcal{O}(\ln |l|)$. 

Conversely, consider when $\bm{\psi}$ is an unbounded but polynomially bounded function of degree 1 (i.e., ``ReLU-like'' activations such as ReLU, GELU, or ELU). To determine its maximum possible growth, assume that as the latent feature diverges ($l \to \infty$), the extrapolated fixed point also grows large and eventually enters a stable linear regime. For strictly piecewise functions like ReLU, this means remaining within a single activation orthant; for smooth variants like GELU or ELU, this means reaching the asymptotic limit where the function behaves linearly (e.g., $\bm{\psi}(x) \approx x$ for $x \gg 1$). In either case, the activation effectively acts as a constant diagonal projection matrix $\bm{P}$, such that $\bm{\psi}(\bm{z}^*) \approx \bm{P}\bm{z}^*$ (with exact equality for ReLU). The equilibrium condition expands to 
$$(\bm{W}_c + l\bm{W}_v)\bm{P}\bm{z}^* + (\bm{h}_c + l\bm{h}_v) \approx \bm{0}.$$ 
Solving for the fixed point yields 
$$ \bm{z}^*(l) \approx -(\bm{W}_c \bm{P} + l\bm{W}_v \bm{P})^{-1}(\bm{h}_c + l\bm{h}_v). $$ 
As $l \to \infty$, the terms linear in $l$ dominate. The matrix inverse contributes a scaling factor of $1/l$, which algebraically cancels the $l$ scaling from the bias vector, resulting in a finite constant limit:
\begin{equation}
    \lim_{l \to \infty} \bm{z}^*(l) = -(\bm{W}_v \bm{P})^{-1} \bm{h}_v.
    \label{eq:fp_limit}
\end{equation}
Because assuming unbounded growth leads to a finite constant, we reach a contradiction. This structural cancellation bounds the asymptotic growth of the fixed point, ensuring $\|\bm{z}^*(l)\| = \mathcal{O}(1)$.

Thus, in both architectural cases, the affine parameter dependence structurally restricts the model's fixed point to scale either logarithmically or to remain constant as a function of the feature $l$. Because the latent feature is forced to scale as $|l(p)| = \Theta(p^n)$ to successfully capture the true system's dynamical scaling, the model's fixed point is strictly bounded by $\mathcal{O}(\ln p)$ or $\mathcal{O}(1)$ with respect to the control parameter $p$. However, the true physical system's fixed point exhibits independent algebraic scaling, $\|\bm{z}^*_{\text{true}}(p)\| = \Theta(p^m)$, in general with $m \neq 0$. Because these functional classes are asymptotically incompatible, the spatial discrepancy $\|\bm{z}^*_{\text{true}}(p) - \bm{z}^*_{\text{model}}(p)\|$ diverges strictly as $p \to \infty$. This mathematically induces an irreducible geometric truncation error, proving that the model must eventually fail to capture the true topological structure outside the training domain. $\square$

\subsection{Proof of Corollary~\ref{cor:cor_1_1}}
\label{appx:proof_cor1_1}

\textit{Proof.} Assuming the true physical system exhibits linear dynamical scaling in the control parameter $p$ (i.e., its Jacobian scales as $\Theta(p)$), the model's latent feature must scale as $l(p) = \Theta(p)$ to correctly match the local growth rates. As established in the proof of Theorem~\ref{theorem:theorem_1} (Appx.~\ref{appx:proof_thm1}), the affine parameter dependence imposes strict asymptotic bounds on the norm of the model's FP, depending on the activation function $\bm{\psi}$. Specifically, for globally bounded activations (e.g., $\tanh$), the norm of the FP is bounded by $\|\bm{z}^*(l)\|_2 = \mathcal{O}(\ln |l|)$. For polynomially bounded activations of degree 1 (e.g., ReLU), the linear scaling of the weights cancels out (Eq.~\ref{eq:fp_limit}), limiting the norm to a constant upper bound, $\|\bm{z}^*(l)\|_2 = \mathcal{O}(1)$. Because the latent feature must scale linearly with the control parameter ($l(p) = \Theta(p)$) to match the true system's local growth rates, the asymptotic behavior of the model's FP is strictly bounded by either $\|\bm{z}^*(p)\|_2 = \mathcal{O}(\ln |p|)$ or $\|\bm{z}^*(p)\|_2 = \mathcal{O}(1)$.

\textbf{Divergence directly at the boundary:} In either case, this architectural constraint prevents the model from independently matching the true physical system's positional scaling. Let $p_{\text{max}}$ be the upper boundary of the training domain. Because the path of the model's FP, $\bm{z}^*_{\text{model}}(p)$, belongs to a fundamentally different algebraic class than the true system's FP path, $\bm{z}^*_{\text{true}}(p)$, their Taylor series expanded at the boundary $p_{\text{max}}$ cannot globally align. Even under the optimal assumption that the model perfectly captures the true system's position and first derivative at the boundary (i.e., the linear dependence on the control parameter), there exists a lowest-order derivative $\kappa \geq 2$ where the expansions diverge. Consequently, for any finite extrapolation $p = p_{\text{max}} + \Delta p$, a spatial truncation error emerges immediately, scaling as $\mathcal{O}(\Delta p^\kappa)$. As $\Delta p$ increases, this spatial mismatch causes perturbations in the eigenvalue spectrum of the Jacobian at the FP. This non-physical spectral drift permits eigenvalues to cross the stability boundary, potentially inducing spurious bifurcations and topological failure. The same argument applies analogously to the lower boundary of the training domain. $\square$

\subsection{Proof of Corollary~\ref{cor:cor_1_2}}
\label{appx:proof_cor1_2}

\textit{Proof.} Assume the true continuous-time system is governed by a scalar control parameter $p$, with the true Jacobian's variational dependence given by $\bm{J}_{\text{true}}(p) = \bm{J}_0 + p \bm{J}_1$. As defined in Proposition~\ref{prop:prop_1} (Appx.~\ref{appx:proof_prop1}), the matrix $\bm{J}_1$ belongs to a sparse $k$-dimensional subspace $\mathcal{S} \subset \mathbb{R}^{N \times N}$. Physically, this restricts the true parameter's influence to a sparse set of exactly $k$ specific matrix entries, meaning the remaining $N^2 - k$ entries of $\bm{J}_1$ are zero.

Assume the DSR model employs an unregularized, multi-dimensional affine feature mapping $\bm{l}: \mathbb{R} \to \mathbb{R}^L$ with $L \geq 2$. The model's Jacobian evaluates to: 
$$ \bm{J}_{\text{model}}(\bm{l}(p)) = \hat{\bm{J}}_0 + \sum_{i=1}^L l_i(p) \hat{\bm{J}}_{1,i} $$ 
By Theorem~\ref{theorem:theorem_1}, to simultaneously satisfy decoupled positional ($\Theta(p^m)$) and dynamical ($\Theta(p^n)$) scaling on the training domain without algebraic entanglement, the mapping $\bm{l}(p)$ must utilize at least two asymptotically distinct functional forms. Therefore, as the control parameter is evaluated far outside the training domain ($|p| \to \infty$), the norm of the feature vector must grow without bound, $\|\bm{l}(p)\|_2 \to \infty$.

Following the decomposition introduced in the proof of Proposition~\ref{prop:prop_1} (Appx.~\ref{appx:proof_prop1}), we separate each learned feature-coupling matrix into physical and spurious orthogonal components using the Frobenius inner product: $\hat{\bm{J}}_{1,i} = \hat{\bm{J}}_{1,i}^{\parallel} + \hat{\bm{J}}_{1,i}^{\perp}$, where $\hat{\bm{J}}_{1,i}^{\parallel} \in \mathcal{S}$ and $\hat{\bm{J}}_{1,i}^{\perp} \in \mathcal{S}^{\perp}$. The true system's dependence on the control parameter is strictly one-dimensional, governed solely by $p\bm{J}_1$. Because the model maps this scalar parameter into an $L$-dimensional feature space, it is structurally over-parameterized. Without rank or sparsity priors, the underlying physics places no constraints on the $(N^2 - k) \times L$ degrees of freedom that map into the non-physical subspace $\mathcal{S}^{\perp}$. Since ERM strictly optimizes for in-domain training loss, it lacks any mechanism or incentive to drive these unconstrained weights to zero. Consequently, these excess dimensions can remain non-zero, yielding a dense orthogonal projection: 
$$ \bm{E}^{\perp}(p) = \sum_{i=1}^L l_i(p) \hat{\bm{J}}_{1,i}^{\perp} \neq \bm{0} $$

Consider the extrapolation error $\bm{E}(p) = \bm{J}_{\text{model}}(\bm{l}(p)) - \bm{J}_{\text{true}}(p)$. Because $\bm{E}^{\perp}(p)$ resides in the orthogonal complement to the true physical scaling, it cannot be canceled by the true dynamics. Thus, the residual error is bounded from below by this orthogonal leakage:
$$ \|\bm{E}(p)\|_2 \geq \|\bm{E}^{\perp}(p)\|_2. $$ Because the matrices $\hat{\bm{J}}_{1,i}^{\perp}$ are fixed and dense, and the feature vector diverges far outside the training domain ($\|\bm{l}(p\to\pm\infty)\|_2 \to \infty$), the magnitude of the orthogonal leakage $\|\bm{E}^{\perp}(p)\|_2$ inevitably diverges when evaluated out-of-domain. Specifically, this spurious leakage grows at a rate proportional to the fastest-growing component of the feature vector, scaling as $\Theta(|p|^{\max(m,n)})$.

To translate this growing matrix error into the system's dynamic behavior, we recall the spectral bound established via the Bauer--Fike theorem in the proof of Proposition~\ref{prop:prop_1} (Appx.~\ref{appx:proof_prop1}). For any eigenvalue $\tilde{\lambda}$ of the extrapolated model Jacobian, its deviation from a true eigenvalue $\lambda$ is bounded by the matrix error: 
$$ |\tilde{\lambda} - \lambda| \leq \kappa(\bm{V}) \|\bm{E}(p)\|_2. $$ Because the orthogonal leakage $\|\bm{E}(p)\|_2$ diverges as $\Theta(|p|^{\max(m,n)})$ when the control parameter scales out-of-domain, the potential perturbation of the model's eigenvalues becomes completely unbounded. Consequently, the model's eigenvalues are free to drift infinitely far away from the true physical eigenvalues. As this non-physical spectral drift grows, the eigenvalues can be pushed across the system's stability boundaries, inducing spurious bifurcations and thus leading to reconstruction failures. $\square$

\subsection{Proof of Theorem~\ref{theorem:theorem_2}}
\label{appx:proof_thm2}

\textit{Proof.} Let the true continuous-time system be governed by the continuous-time Jacobian $\bm{J}_{\text{cont}}(\bm{x}, p) = \bm{J}_0(\bm{x}) + p\bm{J}_1(\bm{x})$. The exact discrete-time Jacobian, mapping the local dynamics over a finite time interval $\Delta t$, must account for the fact that the continuous-time Jacobians evaluated at different points along the trajectory generally do not commute. Therefore, it is given by the time-ordered exponential \cite{blanes2009magnus}:
\begin{equation}
    \bm{J}_{\text{disc}}(\bm{x}, p) = \mathcal{T}\exp\!\left(\int_0^{\Delta t} \bm{J}_{\text{cont}}(\boldsymbol{\Phi}_s(\bm{x}), p) ds\right)
\end{equation}
where the operator $\mathcal{T}$ enforces chronological ordering, ensuring that matrices evaluated at later times appear to the left of those evaluated at earlier times in the expansion.

For notational clarity, in the subsequent derivation we define the shorthand $\bm{J}(s) \equiv \bm{J}_{\text{cont}}(\boldsymbol{\Phi}_s(\bm{x}), p)$ for the continuous-time Jacobian evaluated along the local trajectory. Furthermore, we hereafter suppress the explicit initial state dependence $\bm{x}$ on the left-hand side, writing $\bm{J}_{\text{disc}}(p)$. Expanding the integral via the Peano-Baker series~\cite{rugh1996linear} yields the exact matrix expansion:
\begin{equation}
    \bm{J}_{\text{disc}}(p) = \bm{I} + \int_0^{\Delta t} \bm{J}(s) ds + \int_0^{\Delta t} \bm{J}(s_1) \int_0^{s_1} \bm{J}(s_2) ds_2 ds_1 + \mathcal{O}(\Delta t^3)
\end{equation}

To evaluate these integrals, we Taylor-expand $\bm{J}(s)$ around $s=0$. Expanding around $s=0$ is mathematically required as it is the starting point of the discrete time step, where the system state $\bm{x}$ is exactly known ($\boldsymbol{\Phi}_0(\bm{x}) = \bm{x}$). This allows us to express the local flow over the short interval $\Delta t$ entirely in terms of the initial state and its time derivatives. Evaluating the initial matrix gives $\bm{J}(0) = \bm{J}_{\text{cont}}(\bm{x}, p) = \bm{J}_0 + p\bm{J}_1$. The expansion therefore is:
\begin{equation}
    \bm{J}(s) = \bm{J}(0) + \dot{\bm{J}}(0)s + \mathcal{O}(s^2)
\end{equation}

Substituting this into the Peano-Baker series and evaluating the integrals gives rise to the first- and second-order terms for the true discrete-time Jacobian:
\begin{align}
    \bm{J}_{\text{disc}}(p) &= \bm{I} + (\bm{J}_0 + p\bm{J}_1)\Delta t + \frac{1}{2}\left( (\bm{J}_0 + p\bm{J}_1)^2 + \dot{\bm{J}}_0 + p\dot{\bm{J}}_1 \right) \Delta t^2 + \mathcal{O}(\Delta t^3) \\
    &= \underbrace{\left[ \bm{I} + \bm{J}_0 \Delta t + \frac{1}{2}(\bm{J}_0^2 + \dot{\bm{J}}_0) \Delta t^2 \right]}_{=:\bm{M}_0} \nonumber \\
    &\quad+ p \underbrace{\left[ \bm{J}_1 \Delta t + \frac{1}{2}(\bm{J}_0\bm{J}_1 + \bm{J}_1\bm{J}_0 + \dot{\bm{J}}_1)\Delta t^2 \right]}_{=:\bm{M}_1} \nonumber \\
    &\quad+ p^2 \left[ \frac{1}{2} \bm{J}_1^2 \Delta t^2 \right] + \mathcal{O}(\Delta t^3)
\end{align}

Now, let the discrete-time DSR model employ an affine feature mapping parameterized by $l$ (or $l_{\text{dyn}}$ for models with feature splitting). While the true discrete-time Jacobian above exhibits explicit quadratic dependence on $p$, the model's Jacobian is structurally constrained to be strictly linear with respect to its feature $l$:
\begin{equation}
    \bm{J}_{\text{model}}(l) = \hat{\bm{J}}_{0} + l \hat{\bm{J}}_{1} \ .
\end{equation}
Assume first that the model learns a linear feature mapping $l = \alpha p$ for some scalar $\alpha \neq 0$ (this assumption is relaxed to non-linear mappings $l=g(p)$ below). To minimize the in-domain error, ERM drives the affine model coefficients to align with the true system's linear Taylor components:
\begin{align}
    \hat{\bm{J}}_{0} &\approx \bm{M}_0 \\
    \alpha \hat{\bm{J}}_{1} &\approx \bm{M}_1
\end{align}

We analyze the extrapolation error $\bm{E}(p) = \bm{J}_{\text{model}}(\alpha p) - \bm{J}_{\text{disc}}(p)$, which isolates the structural mismatch. Because the affine model lacks any quadratic degrees of freedom with respect to $l$ (and thus $p$), the unmodeled $\mathcal{O}(p^2)$ physical curvature cannot be accounted for by the network parameters. Evaluating this difference yields the leading-order residual:
\begin{equation}
    \bm{E}(p) = - \frac{1}{2} p^2 \bm{J}_1^2 \Delta t^2 + \mathcal{O}(p^3 \Delta t^3)
\end{equation}

Crucially, the structural limitation persists even if the model learns a non-linear, injective feature mapping $l = g(p)$ (Eq.~\ref{eq:g}). Assuming the mapping captures the true system's linear dependence on the control parameter, we require $g'(0) \neq 0$ (guaranteeing local invertibility via the Inverse Function Theorem; if $g'(0)=0$, the model fails trivially at order $\mathcal{O}(p)$, i.e., it does not capture the linear dependence). Taylor-expanding the mapping around $p=0$ yields $g(p) = g(0) + g'(0)p + \frac{1}{2}g''(0)p^2 + \mathcal{O}(p^3)$. Substituting the expansion into the model Jacobian gives:
\begin{align}
\bm{J}_{\text{model}}(g(p)) &= \left( \hat{\bm{J}}_{0} + g(0)\hat{\bm{J}}_{1} \right) + p \left( g'(0)\hat{\bm{J}}_{1} \right) \nonumber \\
&\quad+ p^2 \left( \frac{1}{2}g''(0)\hat{\bm{J}}_{1} \right) + \mathcal{O}(p^3) \ .
\end{align}
To minimize the in-domain error at linear order $\mathcal{O}(p)$, empirical risk minimization forces $g'(0)\hat{\bm{J}}_{1} = \bm{M}_1$. Because $g$ is invertible, $g'(0) \neq 0$, which algebraically restricts $\hat{\bm{J}}_{1}$ to be a scalar multiple of $\bm{M}_1$. Consequently, isolating the second-order residual $\bm{E}(p) = \bm{J}_{\text{model}}(g(p)) - \bm{J}_{\text{disc}}(p)$ yields:
\begin{equation}
\bm{E}(p) = \frac{1}{2} p^2 \left[ \frac{g''(0)}{g'(0)} \bm{M}_1 - \bm{J}_1^2 \Delta t^2 \right] + \mathcal{O}(p^3 \Delta t^3)
\end{equation}
The non-linear feature map $g(p)$ merely introduces a scalar coefficient ($g''(0)/g'(0)$) that multiplies with the existing first-order matrix $\bm{M}_1$. Because the model's Jacobian parameterization remains strictly affine, the independent, squared matrix term $\bm{J}_1^2$ required by the true discrete-time flow does not occur in this mapping. Thus, the $\mathcal{O}(p^2)$ structural mismatch remains irreducible, regardless of the feature map's non-linearity.

As the model is evaluated at control parameters outside the training domain ($p \not\in [p_\mathrm{min}, p_\mathrm{max}]$), the magnitude of the truncation error, $\|\bm{E}(p)\|_2$, diverges quadratically. The model's strictly affine matrix dependence on control parameters systematically departs from the true discrete-time system's quadratic parameter dependence, giving rise to an irreducible structural mismatch that leads to topological failure. $\square$

\end{document}